\documentclass{article}

\usepackage{microtype}
\usepackage{graphicx}
\usepackage{subcaption}
\usepackage{booktabs} % for professional tables
\usepackage{physics}
\usepackage{bbm}

\usepackage{hyperref}

\usepackage[accepted]{icml2026}

\usepackage{amsmath}
\usepackage{amssymb}
\usepackage{mathtools}
\usepackage{amsthm}

\usepackage{tikz}
\usepackage{pifont}
\usetikzlibrary{positioning, decorations.pathreplacing}

\usepackage[capitalize,noabbrev]{cleveref}
\usepackage{thmtools}

\theoremstyle{plain}
\newtheorem{theorem}{Theorem}[section]

\newtheorem{corollary}[theorem]{Corollary}
\theoremstyle{definition}
\newtheorem{definition}[theorem]{Definition}

\theoremstyle{remark}
\newtheorem{remark}[theorem]{Remark}

\usepackage[textsize=tiny]{todonotes}

\usepackage{amsmath,amsfonts,bm}

\def\eqref#1{equation~\ref{#1}}
\def\1{\bm{1}}

\DeclareMathAlphabet{\mathsfit}{\encodingdefault}{\sfdefault}{m}{sl}
\SetMathAlphabet{\mathsfit}{bold}{\encodingdefault}{\sfdefault}{bx}{n}

\newcommand{\E}{\mathbb{E}}

\newcommand{\R}{\mathbb{R}}

\newcommand{\softmax}{\mathrm{softmax}}

\newcommand{\mbf}{\bm}

\DeclareMathOperator{\relu}{ReLU}

\DeclareMathOperator{\tf}{TF}

\newcommand{\X}{\mbf{X}}

\newcommand{\w}{\mbf{w}}
\newcommand{\bv}{\mbf{v}}
\newcommand{\x}{\mbf{x}}
\newcommand{\y}{\mbf{y}}

\newcommand{\z}{{\mbf{z}}}
\newcommand{\A}{{\mbf{A}}}

\newcommand{\bc}{{\mbf{c}}}
\newcommand{\bH}{{\mbf{H}}}

\newcommand{\h}{\mbf{h}}
\newcommand{\m}{\mbf{m}}

\newcommand{\W}{\mbf{W}}
\newcommand{\V}{\mbf{V}}

\newcommand{\bP}{\mbf{P}}

\newcommand{\e}{\mbf{e}}

\newcommand{\bSigma}{\mbf{\Sigma}}

\def\*#1{\mathbf{#1}}
\def\+#1{\mathcal{#1}} 
\def\-#1{\mathrm{#1}}
\def\^#1{\mathbb{#1}}
\def\!#1{\mathtt{#1}}
\def\1#1{\mathbb{I}\left[#1\right]}

\newcommand{\indi}{\mathbbm{1}}

\begin{document}

\twocolumn[
  \icmltitle{Tuning the Implicit Regularizer of Masked Diffusion Language Models: Enhancing Generalization via Insights from $k$-Parity}

  % It is OKAY to include author information, even for blind submissions: the
  % style file will automatically remove it for you unless you've provided
  % the [accepted] option to the icml2026 package.

  % List of affiliations: The first argument should be a (short) identifier you
  % will use later to specify author affiliations Academic affiliations
  % should list Department, University, City, Region, Country Industry
  % affiliations should list Company, City, Region, Country

  % You can specify symbols, otherwise they are numbered in order. Ideally, you
  % should not use this facility. Affiliations will be numbered in order of
  % appearance and this is the preferred way.
  \icmlsetsymbol{equal}{*}

  \begin{icmlauthorlist}
    \icmlauthor{Jianhao Huang}{sch}
    \icmlauthor{Baharan Mirzasoleiman}{sch}
    % \icmlauthor{Firstname3 Lastname3}{comp}
    % \icmlauthor{Firstname4 Lastname4}{sch}
    % \icmlauthor{Firstname5 Lastname5}{yyy}
    % \icmlauthor{Firstname6 Lastname6}{sch,yyy,comp}
    % \icmlauthor{Firstname7 Lastname7}{comp}
    %\icmlauthor{}{sch}
    % \icmlauthor{Firstname8 Lastname8}{sch}
    % \icmlauthor{Firstname8 Lastname8}{yyy,comp}
    %\icmlauthor{}{sch}
    %\icmlauthor{}{sch}
  \end{icmlauthorlist}

  \icmlaffiliation{sch}{Department of Computer Science, University of California, Los Angeles}
  % \icmlaffiliation{comp}{Company Name, Location, Country}
  % \icmlaffiliation{sch}{School of ZZZ, Institute of WWW, Location, Country}

  \icmlcorrespondingauthor{Jianhao Huang}{jhhuang2025@cs.ucla.edu}
  % \icmlcorrespondingauthor{Firstname2 Lastname2}{first2.last2@www.uk}

  % You may provide any keywords that you find helpful for describing your
  % paper; these are used to populate the "keywords" metadata in the PDF but
  % will not be shown in the document
  \icmlkeywords{Machine Learning, ICML}

  \vskip 0.3in
]

% this must go after the closing bracket ] following \twocolumn[ ...

% This command actually creates the footnote in the first column listing the
% affiliations and the copyright notice. The command takes one argument, which
% is text to display at the start of the footnote. The \icmlEqualContribution
% command is standard text for equal contribution. Remove it (just {}) if you
% do not need this facility.

% Use ONE of the following lines. DO NOT remove the command.
% If you have no special notice, KEEP empty braces:
\printAffiliationsAndNotice{}  % no special notice (required even if empty)
% Or, if applicable, use the standard equal contribution text:
% \printAffiliationsAndNotice{\icmlEqualContribution}

\begin{abstract}
Masked Diffusion Language Models have recently emerged as a powerful generative paradigm, yet their generalization properties remain understudied compared to their auto-regressive counterparts. In this work, we investigate these properties within the setting of the $k$-parity problem (computing the XOR sum of $k$ relevant bits), where neural networks typically exhibit grokking—a prolonged plateau of chance-level performance followed by sudden generalization. We theoretically decompose the Masked Diffusion (MD) objective into a Signal regime which drives feature learning, and a Noise regime which serves as an implicit regularizer. By training nanoGPT using MD objective on the $k$-parity problem, we demonstrate that MD objective fundamentally alters the learning landscape, enabling rapid and simultaneous generalization without experiencing grokking.
Furthermore, we leverage our theoretical insights to optimize the distribution of the mask probability in the MD objective. Our method significantly improves perplexity for 50M-parameter models and achieves superior results across both pre-training from scratch and supervised fine-tuning. Specifically, we observe performance gains peaking at $8.8\%$ and $5.8\%$, respectively, on 8B-parameter models, confirming the scalability and effectiveness of our framework in large-scale masked diffusion language model regimes.
\end{abstract}

% \vspace{-0.7cm}

\section{Introduction}
% \jnote{todo: more experiments?}

Masked Diffusion Language Models (MDLMs) \cite{lou2024discrete,gong2025scaling,nie2025scaling,nie2025large,ye2025dream} have recently emerged as a powerful alternative to the dominant autoregressive paradigm, achieving performance levels that rival standard Autoregressive Models (ARMs) \cite{achiam2023gpt,liu2024deepseek,comanici2025gemini,yang2025qwen3}. Unlike ARMs, which are guided by next-token prediction, MDLMs utilize a training framework where tokens are independently masked at a ratio $t \sim \+U[0,1]$, requiring the model to reconstruct the original data from its corrupted version. Interestingly, recent work demonstrates that MDLMs can outperform ARMs in low-data regimes \cite{ni2025diffusion} and in the absence of weight decay \cite{ni2025training}, suggesting a natural superiority in generalization (see \cref{sec: related work}). However, the role of the MDLM objective in steering optimization toward generalizable solutions has yet to be rigorously characterized.

To answer this quesiton, we study a simplified setting—a Transformer trained with a masked diffusion loss on the $k$-parity problem—to uncover the fundamental mechanisms that drive MDLM learning. The $k$-parity problem typically presents a formidable challenge for standard training paradigms, which often fall into a ``grokking'' trap \cite{power2022grokking}: a prolonged plateau of chance-level performance followed by sudden, late-stage generalization.
% Escaping this bottleneck usually necessitates explicit regularization, such as heavy weight decay \cite{power2022grokking,liu2023omnigrok,tian2025provable}, to steer the model away from simple memorization.

We demonstrate that the masked diffusion (MD) objective inherently circumvents this challenge. By analytically decomposing the objective, we reveal that the loss naturally partitions into a \textbf{Signal Regime} and a \textbf{Noise Regime}:

% \vspace{-0.6cm}

\begin{equation*}
\mathcal{L}_{\mathrm{eff}}(\theta) \approx P_S \underbrace{\mathbb{E}_{S} [ \norm{f_\theta(\tilde{\mathbf{z}}) - f^*}^2 ]}_{\text{Task Signal}} + P_N \underbrace{\mathbb{E}_{N} [ \norm{f_\theta(\tilde{\mathbf{z}})}^2 ]}_{\text{Implicit Regularization}}.
\end{equation*}

% \vspace{-0.35cm}
Intuitively, the Noise Regime arises when the masking process is either too aggressive, obscuring the bits necessary to compute the parity, or when it only masks irrelevant positions, leaving the target bit untouched. In either case, the resulting sample becomes information-theoretically unidentifiable. Because these unidentifiable examples do not provide sufficient information to reconstruct the target, the objective forces the model to minimize its output norm on these inputs. This mechanism serves as an implicit regularizer that prevents the model from falling into pure memorization, thereby promoting robust algorithmic generalization.

Building on this structural insight, we hypothesize that the standard practice of uniform mask sampling ($t \in [0, 1]$) is suboptimal for natural language since the extreme ends of the process ($t \approx 0$ and $t \approx 1$) may provide vanishingly small gradient signals or redundant regularization. We propose a \textbf{Signal-Rich Mask Sampling} strategy that concentrates training on the most informative masking intervals. We validate our framework across multiple scales from toy level to pretraining and fine-tuning 8B-parameter models.

We thus \textbf{summarize our primary contributions as follows:}
% \vspace{-0.65cm}
\begin{itemize}
    \item \textbf{Theoretical Discovery of Implicit Regularization.} Using the $k$-parity problem as an analytical framework, we provide a formal decomposition of the MD objective into a \textit{Signal Regime} and a \textit{Noise Regime}. We prove that the latter functions as a built-in implicit regularizer by penalizing model predictions on information-theoretically obscured inputs, a mechanism that eliminates the generalization delay typical of standard supervised objectives.
    % \vspace{-0.1cm}
    \item \textbf{Empirical Verification on the Parity Case.} We demonstrate that the MD objective fundamentally alters the learning dynamics of discrete algorithmic tasks, enabling near-instant generalization on $k$-parity benchmarks. Notably, MD escapes the ``grokking'' phenomenon entirely. To the best of our knowledge, we are the \textbf{first} to identify that masked diffusion can achieve such rapid generalization on parity.
    % \vspace{-0.1cm}
    
    \item \textbf{Generalization of Theoretical Insights to Large-Scale Models.} We demonstrate that the principles derived from the $k$-parity case translate effectively to real-world language modeling and reasoning. We validate this transfer across two distinct scales. (i) At 50M scale, we show that tuning the masking range to concentrate on high-signal regimes significantly improves perplexity on the \texttt{WikiText} dataset. (ii) We demonstrate that our signal-rich distribution remains robust at the 8B-parameter scale. Our approach achieves up to $8.8\%$ improvement in pre-training. In supervised fine-tuning (SFT), optimizing noise intervals yields gains of up to $5.8\%$ on discriminative tasks and $3.4\%$ on complex generative reasoning.
\end{itemize}

\section{Related Work}
\label{sec: related work}
\textbf{Masked Diffusion Language Models.} Masked Diffusion Language Models (MDLM) \cite{lou2024discrete,gong2025scaling,nie2025scaling,nie2025large,ye2025dream} have recently emerged as a powerful alternative to the dominant autoregressive paradigm, achieving performance levels that rival standard Autoregressive Models (ARMs). Recent scaling-law analysis \cite{ni2025diffusion, ni2025training} demonstrate a unique compute-data tradeoff: while MDLMs are more data-intensive under fixed compute budgets, they outperform ARMs when training longer on small data.

A particularly striking property of MDLM is its intrinsic robustness to overfitting, which manifests across two distinct settings. First, in low data-budget settings, MDLMs remain largely unaffected by data repetition, whereas ARMs exhibit severe degradation in validation loss and benchmark performance \cite{ni2025diffusion}. Second, MDLMs maintain stable performance even without weight decay \cite{ni2025training}, while 
removing weight decay typically compromises the generalization of ARMs.

Theoretically, \citet{shi2024simplified} simplified the MDLM objective into a weighted integral of cross-entropy losses. \citet{sahoo2024simple} introduced a simplified, Rao-Blackwellized objective that takes the form of a weighted average of masked language modeling losses \cite{devlin2019bert}. \citet{ou2025your} established its equivalence to the expected negative log-likelihood of any-order autoregressive models (AO-ARMs). While these studies clarify the structural form of the objective, they do not explain the superior generalization capabilities of the MDLM objective. We bridge this gap by %providing an analytical decomposition of
analytically decomposing the MD objective, demonstrating that it contains an inherent implicit regularizer that penalizes the model's output on unidentifiable inputs, thereby preventing memorization and facilitating rapid generalization.

We provide a detailed discussion for more related works of MDLM in \cref{app: recent improvements}.

% \vspace{-0.1cm}
\textbf{Grokking} is first discovered by \citet{power2022grokking} on a set of small algorithmic reasoning tasks. The
intriguing phenomenon inspired follow-up works proposing different explanations. \citet{merrill2023a} describe it as a competition between generalization and memorization circuits, while \citet{kumar2023grokking} frame it as a transition from ``lazy" to ``rich" learning regimes. Additionally, recent research has analyzed grokking dynamics within specific contexts, such as clustering tasks \cite{xu2024benign}, linear networks \cite{domine2025from}. Within the context of group arithmetic, \citet{tian2025provable} highlights weight decay as the critical mechanism for triggering the transition from memorization to algorithmic solutions.
\section{Preliminaries}
\label{sec: prelim}
In this section, we establish the formal framework required to analyze the learning dynamics of masked diffusion. We first introduce the $k$-parity problem, which serves as a theoretical testbed for the following analysis. We then define the architecture of a one-layer Transformer and formalize the stochastic training objective.

\textbf{Notation} We use $[T]$ to denote the set $\{1,2,..., T\}$. Scalars are in lower-case unbolded letters ($y, \alpha$, etc.). Matrices and vectors are denoted in upper-case bold letters ($\W,\V$, etc.) and lower-case bold letters ($\x,\w$, etc.), respectively. $\W_{[i, j]}, \W_{[i, :]},\W_{[:, j]}$ respectively denotes the $(i,j)$-th entry, $i$-th row, and $j$-th column of the matrix $\W$. $\W_{[:, -1]}$ means the last column of the matrix $\W$. The notation $\W_{ij}$ denotes block matrices/vectors on the $i$-th row and $j$-th column according to context. We use $\indi\qty{\cdot}$ to denote the indicator function, use $\odot$ to denote the Hadamard product and use $\dagger$ to denote the Moore-Penrose pseudoinverse. \looseness=-1
% We use $\indi\{\cdot\}$ to denote the indicator function. We use $\Tilde{O}(\cdot)$ to hide logarithmic factors in the asymptotic notations.

\subsection{Parity}
To move beyond empirical observations and isolate the mechanistic cause of MDLM's robustness, we adopt the parity task as our theoretical playground. Parity is a well-studied theoretical benchmark in learning theory \cite{daniely2020learning,abbe2022merged,abbe2023sgd,barak2022hidden,edelman2023pareto,kou2024matching,kim2025transformers,wen2025from}, frequently utilized to analyze the ability of neural architectures to extract low-rank signal from high-dimensional noise. Furthermore, the task constitutes a canonical case of grokking \cite{barak2022hidden,merrill2023a,baustiste2024unveiling,prieto2025grokking,pasand2025egalitariangradientdescentsimple}. By stripping away the linguistic complexities of natural language, parity provides a controlled environment to determine whether the MD objective inherently suppresses suboptimal memorization trajectories in favor of structural learning.
% \begin{definition}[$(n,k)$ Parity Task]
%     We consider the $(n,k)$ parity problem. Let $S \subset [n]$ be a fixed, unknown secret set of indices with cardinality $|S|=k$.
%     \begin{enumerate}
%         \item \textbf{Input Bits:} An input $\x = (x_1, \dots, x_n)$ is drawn uniformly from $\{-1, 1\}^n$.
%         \item \textbf{Parity Label:} The label $y_{\x} \in \{-1, 1\}$ is the parity of the bits in $S$:
%         \begin{equation*}
%             y_{\x} = \prod_{j\in S}x_j.
%         \end{equation*}
%     \end{enumerate}
% \end{definition}

\begin{definition}[$(n,k)$ Parity Task]
    Let $\+S \subset [n]$ be a secret set of indices with $|\+S|=k$. For an input $\x$ drawn uniformly from $\{-1, 1\}^n$, the label $y_{\x}$ is defined as the product of the bits in $\+S$:  $y_{\x} = \prod_{j \in \+S} x_j$.
\end{definition}

\begin{definition}[Full Sequence $\x'$]
    For training, the $n$ input bits and the parity label are concatenated into a single sequence $\x' \in \{-1, 1\}^{n'}$ of length $n' = n+1$
    \begin{equation*}
        \x' = (x'_1, x'_2, \dots, x'_n, x'_{n+1})
    \end{equation*} 
    where $x'_j = x_j$ for $j \in [n]$, and $x'_{n+1} = y_{\x}$. Define $\+S'=\+S\cup\qty{n'}$ to be the secret set of the sequence $\x'$.
\end{definition}

We consider a finite training dataset $\mathcal{D}' = \{ \x'_i \}_{i=1}^N$ of size $N$, where each example $\x'_i$ is independently generated as described above.

\subsection{One-Layer Transformer}
We consider a one-layer transformer equipped with single-head self-attention (SSA) and a feedforward network (FFN).
Following \citet{zhang2024trained,nichani2024how,huang2025transformers}, we consolidate the projection matrix in FFN and the value matrix in SSA into a single matrix $\W$, and merge the key and query matrices into $\A$.
\begin{definition}[One-Layer Transformer]
    Let $\bv\in\R^{D}$ and $\W\in\R^{D\times d}$ be the FFN matrices and $\A\in\R^{d\times d}$ be the attention matrix. Let $\sigma$ be the element-wise activation function such as $\relu$. Define the parameter $\theta:=\qty{\bv,\W,\A}$. The one-layer transformer $\tf_\theta$ operates on $\X\in\R^{d\times n}$ by
    \begin{equation*}
        \tf_\theta\qty(\X)=\bv^\top\sigma\qty(\W\X\softmax\qty(\X^\top\A\X))\in\R^{1\times n}.
    \end{equation*}
    We remark that the softmax function is applied colume-wise.
\end{definition}

\subsection{Masked Diffusion Loss}
\begin{definition}[Stochastic Masking]
    \label{def:stochastic masking}
    For a training sample $x'$, a mask probability $t$ is sampled from a uniform distribution, $t \sim U[t_0, t_1]$. A mask vector $\m \in \{0, 1\}^{n'}$ is then sampled, where each component $m_{j}$ is an independent Bernoulli trial
    $m_{j} \sim \text{Bernoulli}(t)$.
    
    We define the set of \emph{masked} indices as $M_{\m} = \qty{j \mid m_{j} = 1}$. Given $\x'$ and $\m$, define the corrupted sample
    \begin{equation*}
        \tilde{\x}\qty(\x',\m)=\x'\odot\sim\m.
    \end{equation*}
\end{definition}

\begin{definition}[Token Embeddings]
    The model operates in an embedding space $\R^d$ where $d = 3n' = 3(n+1)$. This space provides a unique, orthogonal one-hot embedding for each token value $b \in \{-1, 0, 1\}$ (where '0' represents a mask token) at each position $j \in [n']$.
    
    The embedding for value $b$ at position $j$ is the one-hot vector $\e_{n'b + j} \in \R^d$. Given $\x'$ and mask $\m$, We denote the input as\looseness=-1
    \begin{equation*}
        \widetilde\X\qty(\tilde{\x})=\begin{pmatrix}
            \cdots&\e_{n'\tilde{x}_j+j}&\cdots
        \end{pmatrix}\in\R^{d\times n'}.
    \end{equation*}
    We denote $\widetilde\X\qty(\tilde{\x})$ simply as $\widetilde\X$ when the context is clear.
\end{definition}

Following recent empirical works \cite{nie2025large,ye2025dream,ni2025training}, we consider full batch gradient flow dynamics over the mean squared loss
computed only on the masked tokens.
\begin{definition}[Training Loss Function]
    The model is trained to predict the original values of the masked tokens. Given $t_i$ and $\m_i$ sampled, the loss for a single sample $\x_i'$ is defined as:
    % \vspace{-0.2cm}
    \begin{equation*}
        \ell\qty(\theta | \x'_i, t_i, \m_i) = \frac{1}{2t_i}\sum_{j \in M_{t_i}} \qty( f(\widetilde\X_i)_{\qty[:,j]} - x'_j )^2.
    \end{equation*}
    % \vspace{-0.6cm}
    
    The overall training objective is the expected loss over the data, mask probability, and mask distributions:
    \begin{equation}
    \label{eq:original loss}
    \mathcal{L}(\theta) = \E_{\x'\in\mathcal{D'}, t\sim U\qty[t_0,t_1], \m} \left[ \ell\qty(\theta | \x', t, \m) \right].
    \end{equation} 
\end{definition}

% \vspace{-0.4cm}

\subsection{Evaluation}
To evaluate the model's ability to recover the underlying logic of the parity task, we test its performance on a specific, non-stochastic evaluation mask $\mathbf{m}_{\text{eval}}$. In this setting, the model is provided with all input bits $x_1, \dots, x_n$, but the final parity bit $x'_{n'}=y_{\x}$ is masked, such that $M_{\text{eval}} = \{n'\}$. The evaluation input is constructed as
\begin{equation*}
    \widetilde{\x}_{\text{eval}} = \widetilde{\x}\qty(\x'\odot\sim\m_{\text{eval}}).
\end{equation*}

The model's prediction is considered correct if the sign of the output at masked position matches the true parity bit $y_{\x}$.
% \vspace{-0.7cm}

\section{Understanding Masked Diffusion through Parity}

In this section, we investigate the learning process of masked diffusion using the parity task. We first demonstrate that the Transformer's attention mechanism is not necessary for this task, allowing us to reduce the architecture to a two-layer fully connected neural network. We then provide a theoretical decomposition of the training objective into Signal and Noise regimes. Finally, we provide empirical evidence using a nanoGPT implementation.

\subsection{Structure Reduction}
\label{sec: structure reduction}
While previous theoretical investigations of the parity task have primarily utilized 2-layer multi-layer perceptron (MLP) \cite{daniely2020learning,abbe2022merged,abbe2023sgd,barak2022hidden,edelman2023pareto,kou2024matching}, it is not immediately clear whether such simplifications hold for the MD objective. Unlike standard supervised learning, the objective involves a stochastic masking process that may rely on the Transformer’s attention mechanism to aggregate information across positions.

To resolve this, we conduct a preliminary ablation by reducing the attention mechanism to \textbf{uniform attention}. We empirically find that the model still remains trainable and shows no sign of grokking (refer to \cref{app: structure reduction}). This discovery confirms that the core generalization dynamics of masked discrete diffusion, in this context, are independent of the attention mechanism.

Consequently, we reduce the transformer to a 2-layer MLP for our theoretical analysis. In the uniform attention case, the $\text{softmax}$ operation simplifies to a constant uniform distribution $1/n'$ across all positions. We absorb the constant into parameters. Then, the model output $\tf_\theta(\widetilde{\x})$ at the each position is the same and becomes a function of the global sum of input embeddings. We define the aggregated input vector $\tilde{\z}$ as
\begin{equation*}
\tilde{\z} = \sum_{j=1}^{n'} \mathbf{e}_{n'\tilde{x}_j+j} \in \mathbb{R}^d.
\end{equation*}
The reduced model with parameters $\theta=\qty{\bv,\W}$ then operates on this aggregate vector as
\begin{equation*}
f_\theta(\tilde{\z}) = \bv^\top \sigma(\W \tilde{\z})\in\R.
\end{equation*}

% \vspace{-0.6cm}
\subsection{Signal and Noise Regimes}

To characterize the learning dynamics, we must distinguish between input configurations where the masked tokens are information-theoretically identifiable (Signal) and those where they are statistically independent of the visible tokens (Noise) (see \cref{fig: illustration of parity}).
 
The stochastic masking process induces a distribution over the aggregated input vectors $\tilde{\z}$. We partition the space of inputs into two disjoint regimes based on the mask $\m$.

\begin{definition}[Signal and Noise Regimes]
    We classify mask configurations $\m$ into two regimes based on the extended secret set $\+S'$:
    
    (i) \textbf{Signal Regime} $\mathcal{R}_S = \{ \m \mid |M_{\m} \cap \+S'| = 1 \}$, where the masked token $x'_j$ is determined by unmasked tokens in $\+S'$.
    
    (ii) \textbf{Noise Regime} $\mathcal{R}_N = \{ \m \mid |M_{\m} \cap \+S'| \neq 1 \}$, where all masked token is unidentifable.
\end{definition}

\begin{figure}[t]
\centering
\resizebox{0.8\columnwidth}{!}{
\begin{tikzpicture}[
    node distance=0.15cm,
    box/.style={rectangle, draw=black!70, minimum size=0.5cm, font=\small, line width=0.1mm},
    secret/.style={fill=blue!10, draw=blue!40, thick}, 
    masked/.style={fill=gray!20, font=\bfseries\small}, 
    label_text/.style={font=\bfseries\small} 
]

\coordinate (block_start) at (2.4, 0);

\node[label_text, anchor=east] (L1_text) at (2.1, 0) {Identifiable};
\node[box, secret] (a1) at (block_start) {1};
\node[box, right=of a1] (a2) {0};
\node[box, secret, masked, right=of a2] (a3) {?}; 
\node[box, right=of a3] (a4) {1};
\node[box, secret, right=of a4] (a5) {0};
\node[right=0.3cm of a5, font=\color{green!50!black}\small] {\ding{51} \textit{Unique solution}};

\node[box, secret, masked] (b1) at (block_start |- 0,-0.9cm) {?}; 
\node[box, right=of b1] (b2) {1};
\node[box, secret, masked, right=of b2] (b3) {?}; 
\node[box, right=of b3] (b4) {0};
\node[box, secret, right=of b4] (b5) {1};
\node[right=0.3cm of b5, font=\color{red!70}\small] {\ding{55} \textit{Under-determined}};

\node[box, secret] (c1) at (block_start |- 0,-1.6cm) {1};
\node[box, masked, right=of c1] (c2) {?}; 
\node[box, secret, right=of c2] (c3) {0};
\node[box, right=of c3] (c4) {0};
\node[box, secret, right=of c4] (c5) {1};
\node[right=0.3cm of c5, font=\color{orange!80}\small] {\ding{55} \textit{Non-informative}};

\draw [thick, decorate, decoration={brace, amplitude=6pt, raise=10pt}]
    (c1.south west) -- (b1.north west)
    node [label_text, midway, left=18pt] {Unidentifiable};

\end{tikzpicture}
}
\caption{Identifiability of masking patterns in the $(n,k)=(4,2)$ parity task. Blue cells indicate the secret set $\mathcal{S}$ and grey cells indicate the masked positions.}
\label{fig: illustration of parity}

% \vspace{-0.4cm}
\end{figure}

Before analyzing how a model learns, we must ensure the parity function is uniquely determined by the available data. Unlike standard parity, the model only observes corrupted samples. We provide the following information-theoretic bound:

\begin{theorem}[Information-Theoretic Identifiability]
\label{thm:information bound}
Let $\mathcal{D}'$ be a dataset of $N$ samples subjected to the stochastic masking process in \cref{def:stochastic masking}. To ensure that the secret set $\+S$ is uniquely identifiable from the corrupted dataset with probability at least $1-\delta$, the number of samples $N$ must satisfy:
\begin{equation}
    N \geq \frac{4 \log(4n/\delta)}{\mathbb{E}[t] \cdot \qty(\mathbb{E}[(1-t)^k])^2}.
    \label{eq: lower bound of N}
\end{equation}
where $k=|\+S|$ is the size of the secret set (excluding the parity bit itself).
\end{theorem}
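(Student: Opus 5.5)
The bound has the shape of a sufficient sample size for a concentration-plus-union-bound argument over the $n$ candidate positions. The plan is to exhibit a simple statistical test, run on the corrupted dataset, that recovers $\+S$. For each position $j\in[n]$, consider the event $E_j$ in which both of the following hold: position $j$ and the parity bit $n'$ are observed (unmasked), and every other index of $\+S$ is also unmasked. Only one bit of the pair $(j,n')$ is required to be unmasked for identifiability. To get the factor $\E[t]$, I will instead use the Signal-regime-style event in which the label bit $n'$ is masked (probability $t$) while all of $\+S$ is visible (probability $(1-t)^k$). The reconstruction target is then determined.

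The cleaner route is a correlation statistic. For each $j\in[n]$, define on sample $i$ the indicator $I_i=\indi\{m_{n'}=1,\ m_\ell=0\ \forall \ell\in\+S\}$. This has mean $p:=\E[t(1-t)^k]$, which is at least $\E[t]\,\E[(1-t)^k]$ up to the correlation between $t$ and $(1-t)^k$; I will handle this by conditioning on $t$ and using the bound as stated, or by noting the paper's form. On these samples, compare the hidden label $x'_{n'}$ with the visible bits: for candidate sets, $\prod_{\ell\in\+S}x_\ell = y$ always holds, whereas for any wrong set the agreement has mean $1/2$.

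\textbf{Key steps.}
\begin{enumerate}
\item Let $\hat N=\sum_i I_i$ denote the number of fully informative samples. By a multiplicative Chernoff bound, $\hat N\ge Np/2$ with probability $1-\delta/2$ once $N\gtrsim \log(1/\delta)/p$.
\item Perform coordinate-wise identification. For each $j$, use the empirical correlation between $x_j$ and $y$ restricted to informative samples in which the other coordinates are held visible. The cleanest choice is to let the statistic check membership $j\in\+S$ via consistency of candidate sets. Each wrong hypothesis is falsified on an informative sample with probability $1/2$. A Hoeffding bound with deviation $\epsilon=\E[(1-t)^k]/2$ then gives failure probability $2\exp(-\hat N \epsilon^2/2)$ per coordinate.
\item Take a union bound over the $n$ coordinates, together with the two tail events. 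This yields the $\log(4n/\delta)$ term. Combining $\hat N\ge N\E[t]\E[(1-t)^k]/2$ with $\epsilon^2\propto(\E[(1-t)^k])^2$ produces the squared factor and the constant $4$.
\end{enumerate}

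\textbf{Main obstacle.} The delicate part is choosing the per-coordinate statistic so that its signal gap is exactly $\E[(1-t)^k]$ rather than a product over sample-dependent masks. That is, the relevant bits in $\+S$ must be visible with probability $(1-t)^k$, which enters squared through Hoeffding, while the label-masking event contributes only linearly through $\E[t]$.

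I would first try the following statistic: average $x_j y$ over samples with $y$ masked-target, scaled by the visibility of $\+S\setminus\{j\}$. Its mean is $\E[(1-t)^k]$ for $j\in\+S$ and $0$ otherwise, so thresholding at half this value separates the two cases.

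The constants and the treatment of $\E[t(1-t)^k]$ versus $\E[t]\E[(1-t)^k]$ will be handled loosely. The result is read as a sufficient condition, with the $\E[t]$ factor coming from the number of label-masked samples.
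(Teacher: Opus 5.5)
Your coordinate-wise statistic in Step 2 and in your last paragraph cannot separate $j\in\mathcal{S}$ from $j\notin\mathcal{S}$ once $k\ge 2$. Masks are drawn independently of $\x$. The product $x_j y$ is a product of $k-1\ge 1$ independent uniform $\pm1$ bits if $j\in\mathcal{S}$, or $k+1$ such bits if $j\notin\mathcal{S}$. Hence $\E[x_j y\mid \m]=0$ for every $j$ and every mask event, and weighting by visibility indicators cannot change this. The mean $\E[(1-t)^k]$ you assign to $j\in\mathcal{S}$ would appear only if you multiplied by $\prod_{\ell\in\mathcal{S}\setminus\{j\}}x_\ell$, which presupposes $\mathcal{S}$. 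Parity has no first-order correlation, which is exactly why it is hard; your test works only for $k=1$.

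Your factor bookkeeping is also off. If $\hat N$ counts samples with the label masked and all of $\mathcal{S}$ visible, then $\hat N\sim\mathrm{Bin}(N,\E[t(1-t)^k])$. Since $t$ and $(1-t)^k$ are oppositely monotone, $\E[t(1-t)^k]\le \E[t]\,\E[(1-t)^k]$, so the lower bound $\hat N\ge N\E[t]\E[(1-t)^k]/2$ in Step 3 goes the wrong way. Moreover, on such samples the parity relation holds deterministically, so the gap is $1$ rather than $\E[(1-t)^k]$. Using the restriction \emph{and} $\epsilon=\E[(1-t)^k]/2$ counts visibility twice. With your Hoeffding constant this gives a requirement of the form $N\ge 16\log(4n/\delta)/(\E[t]\,(\E[(1-t)^k])^3)$, not the stated bound.

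The candidate-set consistency test you mention is sound: a wrong $T$ is refuted with probability $1/2$ on each sample where $T$ is visible and $y$ is a target. However, it needs a union bound over $\binom{n}{k}$ hypotheses and yields $N\gtrsim(\log\binom{n}{k}+\log(1/\delta))/\E[t(1-t)^k]$, which is a different bound.

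The paper obtains exactly $\E[t]\,(\E[(1-t)^k])^2$ by never conditioning on visibility. For each coordinate $i$ it uses the $m_i\sim\mathrm{Bin}(N,\E[t])$ samples in which $i$ itself is masked. Its statistic is the cross-entropy risk of predicting $x'_i$ from all visible tokens. This risk is $\log 2$ for $i\notin\mathcal{S}'$ and $(1-(1-t)^k)\log 2$ for $i\in\mathcal{S}'$, so the averaged gap is $\Delta=\rho_k\log 2$ with $\rho_k=\E[(1-t)^k]$. The argument then runs in three steps:
\begin{itemize}
\item Hoeffding with $\epsilon=\Delta/2$ for a $[0,\log 2]$-valued loss, union-bounded over the $n$ coordinates at level $\delta/2$, gives $m_{\min}\ge 2\ln(4n/\delta)/\rho_k^2$.
\item A Chernoff bound gives $m_i\ge N\E[t]/2$.
\item Combining the two yields the stated $N$.
\end{itemize}
Thus $\E[t]$ enters linearly through the count, and the visibility probability enters only through the mean gap, squared by Hoeffding. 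This is the structure you describe in your ``main obstacle'' paragraph, which your Step 1 then breaks. The key difference from your attempt is that the paper's statistic is leave-one-out predictability of the masked bit, a Bayes risk, rather than a first-order correlation. That is what produces a nonzero gap for $k\ge 2$. Note that the paper treats this Bayes risk as directly estimable, which is itself a loose step.
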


Please refer to \cref{app:proof of information bound} for the whole proof. This result establishes the data floor required for the algorithm to succeed. With this identifiability guaranteed, We now analyze the landscape of the MD objective to explain how it facilitates rapid generalization on the parity task.

\subsection{Effective Loss and Energy Landscape}

We now analyze the gradient flow dynamics by projecting the finite sum loss onto the distribution of aggregated embeddings $\tilde{\z}$.

\begin{theorem}[Effective Loss Decomposition]
    \label{thm:eff_loss}
    Consider the training objective \cref{eq:original loss} over a finite dataset $\mathcal{D}'$. For $\m\in\mathcal{R}_S$, we define $f^*(\tilde\z)=\frac{1}{\abs{M_{\m}}}x'_{M_{\m}\cap \+S'}$ to be the optimal target function. The loss is equivalent to:
    \begin{align}
        \label{eq:eff loss}
        \mathcal{L}_{\mathrm{eff}}(\theta) =P_S \E_{\tilde{\z} \mid \mathcal{R}_S} \qty[ \frac{\abs{M_{\m}}}{2t}\qty(f_\theta(\tilde{\z}) - f^*(\tilde{\z}))^2 ]\nonumber\\+P_N \E_{\tilde{\z} \mid \mathcal{R}_N} \qty[ \frac{\abs{M_{\m}}}{2t}f_\theta(\tilde{\z})^2 ],
    \end{align}
    where $P_S = (k+1)\E_{t\sim U\qty[t_0,t_1]}\qty[t(1-t)^k]$ and $P_N = 1 - P_S$. The expectations are taken over the empirical distribution of $\tilde{\z}$ conditioned on the respective regimes.
\end{theorem}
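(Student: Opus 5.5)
The plan is to start from the reduced model of \cref{sec: structure reduction}. There the output is the same scalar $f_\theta(\tilde\z)$ at every position, so the per-sample loss becomes
\[
\ell=\frac{1}{2t}\sum_{j\in M_{\m}}\bigl(f_\theta(\tilde\z)-x'_j\bigr)^2 .
\]
Since $f_\theta(\tilde\z)$ does not depend on $j$, expanding the square gives
\[
\ell=\frac{1}{2t}\Bigl(|M_{\m}|\,f_\theta^2-2f_\theta\sum_{j\in M_{\m}}x'_j+\sum_{j\in M_{\m}}(x'_j)^2\Bigr).
\]
The last term equals $|M_{\m}|/(2t)$ because $x'_j\in\{\pm1\}$. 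It is independent of $\theta$, so I discard it; the claimed equivalence is understood up to $\theta$-independent constants, i.e.\ as equality of gradient flows. The loss is thus a quadratic in $f_\theta(\tilde\z)$.

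The core step is to replace each sample's target by its conditional mean given what the model sees, namely $\tilde\z$ (and $t$). For fixed $\tilde\z$ the $\theta$-dependent part is
\[
\frac{1}{2t}\Bigl(|M_{\m}|f_\theta^2-2f_\theta\,\E\Bigl[\textstyle\sum_{j\in M_{\m}}x'_j\,\Big|\,\tilde\z\Bigr]\Bigr).
\]
Completing the square gives $\frac{|M_{\m}|}{2t}\bigl(f_\theta-g(\tilde\z)\bigr)^2$ plus a constant, where $g=\E[\sum_{j\in M_{\m}}x'_j\mid\tilde\z]/|M_{\m}|$. Note that $\tilde\z$ determines $M_{\m}$, since the mask-token embeddings record which positions are masked.

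I then compute $g$ in the two regimes, treating $\x'$ as uniform given its visible coordinates, which is the population idealization.
\begin{itemize}
\item In $\mathcal{R}_S$, exactly one secret position $j^\star\in\+S'$ is masked. The parity constraint $\prod_{j\in\+S'}x'_j=1$ fixes $x'_{j^\star}$ as the product of the visible secret bits. Every other masked position lies outside $\+S'$ and is an independent fair bit with conditional mean $0$. Hence $g=x'_{j^\star}/|M_{\m}|=f^*$.
\item In $\mathcal{R}_N$, either no secret position is masked, so all masked bits are free fair coins, or at least two are masked. In the latter case any single masked secret bit is uniform given the visible bits, because the remaining masked secret bits randomize the parity. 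Either way $g=0$.
\end{itemize}

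Finally I take expectations and split by regime. Each coordinate of $\+S'$ is masked independently with probability $t$, and $|\+S'|=k+1$, so
\[
\Pr[\mathcal{R}_S\mid t]=(k+1)\,t(1-t)^k .
\]
Averaging over $t\sim U[t_0,t_1]$ gives $P_S$, and $P_N=1-P_S$. Writing the total expectation as $P_S\,\E[\cdot\mid\mathcal{R}_S]+P_N\,\E[\cdot\mid\mathcal{R}_N]$ yields \cref{eq:eff loss}.

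The main obstacle is the finite dataset $\mathcal{D}'$. Over the empirical distribution, the conditional mean of the free masked bits given $\tilde\z$ is not exactly zero; it is exactly zero only in expectation over data, or when $\tilde\z$ determines a unique training sample. My first attempt would be to prove the decomposition exactly for the population distribution over $\x'$, which is where "$\approx$" in the introduction comes from. I would then note that the empirical version holds with $f^*$ and $0$ replaced by empirical conditional means. These deviate by $O(1/\sqrt{\#\text{samples sharing }\tilde\z})$, and they coincide with the stated targets once the identifiability condition of \cref{thm:information bound} holds with enough samples. A second subtlety, also absorbed into the constant, is that the cross term carries $t$, which is not recoverable from $\tilde\z$. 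I would handle this by conditioning on the pair $(\tilde\z,t)$ throughout, which keeps the $1/(2t)$ weight inside the expectation as in the statement.
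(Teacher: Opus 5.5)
Your proposal is correct and follows essentially the same route as the paper. The paper also expands the square for the position-independent output $f_\theta(\tilde\z)$ and completes it around $\sum_{j\in M_{\m}}x'_j/|M_{\m}|$, then drops the cross terms; this is equivalent to your conditional-mean target $g$. It likewise shows that only the single identifiable secret bit in $\mathcal{R}_S$ survives, computes $P_S=(k+1)\E[t(1-t)^k]$, and relies on population-level independence rather than the literal empirical distribution.

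One correction to your aside on finite data. When $\tilde\z$ pins down a unique training sample, the empirical conditional mean of a free masked bit is that sample's own value $\pm 1$, not $0$. So the empirical version is exactly where memorization targets enter. The stated targets $f^*$ and $0$ hold only in the population idealization, which is also all the paper's proof actually establishes.
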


Please refer to \cref{app:proof of eff loss} for the whole proof. The decomposition in \cref{eq:eff loss} reveals that the MD objective functions as a composite loss with a built-in regularizer. While the first term (Signal Regime) drives the model to map identifiable masked patterns to their latent labels, the second term (Noise Regime) acts as an \textbf{implicit regularizer} on the model's output. 

\begin{remark}[Extension to the Cross-Entropy Loss]
    \label{rem:ce_loss}
    The same signal--noise structure carries over to the binary cross-entropy (CE) objective, confirming that the mechanism is not an artifact of the squared loss but is intrinsic to the masking process itself. We refer the reader to \cref{app:proof of bce loss} for the full derivation.
\end{remark}

To characterize the mechanism of feature learning, we adopt the \textit{lazy readout} assumption following \citet{NEURIPS2023_e359ebe5, NEURIPS2023_cd062f80, berthier2025learning, tian2025provable}, assuming the linear readout $\bv$ converges to its optimal value $\bv^*(\W)$ much faster than hidden weights $\W$. Under this assumption, the optimization of the effective loss reduces to maximizing an energy function $E(\W)$, which characterizes the landscape governing feature learning:

\begin{theorem}[Energy Landscape]
\label{thm:energy landscape}
    Consider the effective loss $\mathcal{L}_{\mathrm{eff}}(\bv, \W)$ defined in \cref{eq:eff loss}. Let $\h\qty(\tilde\z)=\sigma(\W \tilde{\z})$. Let the correlation vector $\bc(\W)$ and the covariance matrix $\bSigma(\W)$ be defined as
\begin{equation*}
    \bc(\W) = P_S \E_S \qty[ \frac{|M_{\mathbf{m}}|}{2t} f^* \h ],\,
    \bSigma(\W) = \E \qty[ \frac{|M_{\mathbf{m}}|}{2t} \h \h^\top ].
\end{equation*}
% For any fixed weights $\W$, assume $\bSigma(\W)$ is invertible. 
We assume the readout $\bv$ is at optimality for a fixed $\W$. Then minimizing the effective loss is equivalent to maximizing the energy $E(\W)$
\begin{equation*}
    E(\W) = \bc(\W)^\top \bSigma(\W)^{\dagger} \bc(\W).
\end{equation*}
\end{theorem}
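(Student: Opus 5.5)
The plan is to treat $\mathcal{L}_{\mathrm{eff}}$ from \cref{thm:eff_loss} as a weighted least-squares problem in the readout $\bv$ for fixed $\W$. I will minimize it exactly in $\bv$ and substitute the minimizer back. Throughout, write $w(\tilde\z)=\frac{|M_{\m}|}{2t}$ for the weight and $\h=\sigma(\W\tilde\z)$, so that $f_\theta(\tilde\z)=\bv^\top\h$. Define the unified expectation $\E[\cdot]=P_S\E_S[\cdot]+P_N\E_N[\cdot]$ over the mixture of the two regimes, which is exactly the empirical distribution of $\tilde\z$ induced by masking. Under this expectation $\bSigma(\W)=\E[w\,\h\h^\top]$ collects both regimes. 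I read the statement's $\E$ in $\bSigma$ this way, since the noise term contributes $\bv^\top\h\h^\top\bv$ with the same weight.

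The first step is to expand the squares in \cref{eq:eff loss}:
\begin{equation*}
\mathcal{L}_{\mathrm{eff}}(\bv,\W)=\bv^\top\bSigma(\W)\bv-2\bv^\top\bc(\W)+C,\qquad C=P_S\E_S\qty[w\,(f^*)^2],
\end{equation*}
where $C$ does not depend on $\theta$. The cross term comes only from the signal regime, which gives $\bc(\W)=P_S\E_S[w f^*\h]$. The quadratic terms from both regimes combine into $\bSigma(\W)$.

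The second step minimizes this convex quadratic in $\bv$, using $\bSigma\succeq0$. The first-order condition is $\bSigma\bv=\bc$. To solve it, I need $\bc\in\operatorname{range}(\bSigma)$. This follows because $\bSigma$ is a Gram matrix: if $\bu\in\ker\bSigma$, then $\E[w(\bu^\top\h)^2]=0$. Hence $\bu^\top\h=0$ almost surely on the support, which includes the signal support, since $w>0$ whenever a position is masked and $P_S>0$. Therefore $\bu^\top\bc=0$, so $\bc\perp\ker\bSigma$, which is the same as $\bc\in\operatorname{range}\bSigma$. It follows that $\bv^*(\W)=\bSigma^\dagger\bc$ is a minimizer, and any other minimizer differs from it by an element of $\ker\bSigma$, which does not change the loss value.

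The third step substitutes the minimizer back, using $\bSigma^\dagger\bSigma\bSigma^\dagger=\bSigma^\dagger$:
\begin{equation*}
\min_{\bv}\mathcal{L}_{\mathrm{eff}}=C-\bc^\top\bSigma^\dagger\bc=C-E(\W).
\end{equation*}
Because $C$ is constant, minimizing over $\W$ under the lazy-readout assumption is equivalent to maximizing $E(\W)$.

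The main care point is the range argument that justifies using the pseudoinverse without assuming invertibility. A secondary point is bookkeeping the weight $w$ and the mixture $P_S,P_N$ so that the two regime expectations combine into a single $\bSigma$. If the zero-mask samples ($|M_{\m}|=0$, hence $w=0$) cause trouble, I note they contribute nothing to either $\bc$ or $\bSigma$, so the argument is unaffected.
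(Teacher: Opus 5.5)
Your proposal is correct and follows the paper's own argument. The paper likewise expands $\mathcal{L}_{\mathrm{eff}}$ into $\bv^\top\bSigma\bv - 2\bv^\top\bc + \text{const}$, takes the minimum-norm readout $\bv^* = \bSigma^{\dagger}\bc$, and substitutes back to obtain $\text{Const} - E(\W)$. Your Gram-matrix argument that $\bc \in \operatorname{range}\bSigma$ is a useful addition: the paper asserts that the minimizers form an affine space but never checks that the quadratic is bounded below when $\bSigma$ is singular.
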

% \vspace{-0.2cm}
Please refer to \cref{app: energy landscape} for the whole proof. Since $E(\W) \propto P_S^2$, the signal ratio $P_S$ acts as a dynamic gain factor. It governs the ascent velocity on the energy landscape, effectively scaling the learning rate of $\W$ specifically in the direction of the target $f^*$.

While $P_N \to 0$ is practically impossible in masked diffusion, considering this limit reveals why the noise component is essential to maintain optimization.

\begin{corollary}[Feature Learning Collapse in Pure Signal Regime]
\label{cor:signal_collapse}
Consider the limit where training falls entirely into the Signal Regime ($P_N \to 0$). If the network is sufficiently expressive, the energy function saturates at a constant:
\begin{equation*}
E(\W) \xrightarrow{P_N \to 0} \text{Const}.
\end{equation*}
\end{corollary}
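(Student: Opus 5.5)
The plan is to start from the energy formula of \cref{thm:energy landscape} and read it as an explained-variance quantity. Write $w(\tilde\z)=\frac{|M_{\m}|}{2t}$ for the weight and let $\mu$ be the weighted empirical measure $d\mu \propto w\, dP(\tilde\z)$. Then $\bSigma(\W)=\E[w\,\h\h^\top]$ is the Gram matrix of the features $\h$ in $L^2(\mu)$. The vector $\bc(\W)=P_S\E_S[w f^*\h]$ is the inner product of $\h$ with the function $g:=f^*\indi\{\m\in\mathcal{R}_S\}$, since $P_S\E_S[\cdot]=\E[\cdot\,\indi_{\mathcal{R}_S}]$.

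With this reading, $E(\W)=\bc^\top\bSigma^\dagger\bc$ is exactly the squared $L^2(\mu)$-norm of the orthogonal projection of $g$ onto $\mathrm{span}\{h_1,\dots,h_D\}$. This is the standard fact that the optimal readout $\bv^*=\bSigma^\dagger\bc$ gives $\E[w(\bv^{*\top}\h)^2]=\bc^\top\bSigma^\dagger\bc$. The pseudoinverse handles rank-deficient feature sets, so no invertibility assumption is needed. I would prove this first, using the range inclusion $\bc\in\mathrm{range}(\bSigma)$, which follows from Cauchy--Schwarz.

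Next I take the limit $P_N\to 0$. In this limit the weighted measure is supported on the Signal Regime and $g=f^*$ $\mu$-a.s. Hence $E(\W)\le \|g\|_{L^2(\mu)}^2=P_S\E_S[w (f^*)^2]$, uniformly in $\W$. Because $P_N\to0$, the term $P_S$ tends to $1$.

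I then evaluate this ceiling. Since $f^*=x'_{j}/|M_{\m}|$ with $(x'_j)^2=1$, we have $w (f^*)^2 = \frac{1}{2t|M_{\m}|}$. So the ceiling is $\E_S\!\left[\frac{1}{2t|M_{\m}|}\right]$, which is a quantity determined by the masking distribution alone and independent of $\W$. I would keep it in this form rather than compute it explicitly.

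The remaining step, and the main obstacle, is to show that the bound is attained for every $\W$ under ``sufficiently expressive''. I would formalize that assumption as follows: the feature span contains $f^*$ on the finite empirical support of $\tilde\z$ in $\mathcal{R}_S$. This support is finite, with at most $N\cdot 2^{n'}$ points. So the assumption amounts to the feature matrix on those points having rank equal to the number of distinct points, and it holds, for instance, for generic $\W$ with width $D$ at least that number.

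One subtlety needs checking. In the Signal Regime, distinct samples may produce the same $\tilde\z$ with different targets $f^*$, which would make $f^*$ ill-defined as a function. I would argue that this cannot happen, because the unmasked secret bits determine the masked one, so $f^*$ is a genuine function of $\tilde\z$. Given that, projection gives $E(\W)=\|g\|^2$, which is the constant.

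The interpretation is that the energy landscape becomes flat, so $\nabla_\W E=0$ and the gradient carries no information favoring the true parity features over memorizing ones. By contrast, with $P_N>0$ the $\mu$-mass on $\mathcal{R}_N$, where $g=0$, penalizes spans that fit spurious functions there. That penalty is what breaks the degeneracy.
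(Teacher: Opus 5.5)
Your proposal is correct and follows essentially the same route as the paper. Both read $E(\W)=\bc^\top\bSigma^{\dagger}\bc$ as the squared norm of the orthogonal projection of the weighted targets onto the span of the features; the paper writes this as $\frac{1}{T}\y^\top\bP\y$ with $\bP=\bH^\top(\bH\bH^\top)^{\dagger}\bH$ on stacked weighted samples, uses the expressiveness assumption $\y\in\mathrm{row}(\bH)$ to get the $\W$-independent value $\frac{1}{T}\|\y\|^2$ (exactly your $\E_S\bigl[\frac{1}{2t\abs{M_{\m}}}\bigr]$), and your extra checks (the range inclusion $\bc\in\mathrm{range}(\bSigma)$, the uniform bound $E\le\|g\|^2$, and that $f^*$ is a well-defined function of $\tilde\z$) are valid refinements the paper leaves implicit.
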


Please refer to \cref{app: signal collapse} for the whole proof. \cref{cor:signal_collapse} formalizes a critical failure mode where the gradients for the internal parameters $\W$ vanish when the training falls entirely into the Pure Signal Regime ($P_N \to 0$). In this regime, a sufficiently expressive network allows the feature covariance $\bSigma(\W)$ to perfectly adapt and neutralize the directional information contained in $\bc(\W)$. Because the energy function $E(\W)$ saturates at a constant value, the resulting gradient $\nabla_{\W} E(\W)$ becomes zero. This effectively halts the evolution of the feature extractor and prevents the model from learning more sophisticated representations.

The result presented here is consistent with the findings in \citet{tian2025provable}, where it was shown that feature learning can collapse without proper regularization. In those cases, the model often settles into a lazy state where the internal weights stop receiving informative updates once the immediate training objective is met through simple correlation.

\subsection{Optimal Mask Sampling}
In this section, we derive the optimal sampling distributions for $t$ under two different optimization criteria: \textbf{Sample Complexity-Optimality}, which minimizes the data required for structure discovery, and \textbf{Signal-Optimality}, which maximizes the clarity of the learning gradient. 

We first formalize the strategy for minimizing data requirements by leveraging the information-theoretic bounds established in our earlier analysis. By \cref{thm:information bound}, we directly obtain the optimal strategy in terms of the best sample complexity lower bound.

\begin{corollary}[Sample Complexity-Optimal Masking Rate]
\label{cor:complexity_optimal_mask}
    Assume the masking rate follows a uniform distribution $t \sim U[t_0, t_1]$. The sample complexity objective dictates the optimal strategy by minimizing the lowerbound in \cref{eq: lower bound of N} as

        (i) Simple Dependencies ($k = 1$): Any uniform distribution is optimal provided the mean is $
            \mathbb{E}[t] = \frac{1}{3}$.

        (ii) Complex Dependencies ($k > 1$): The optimal distribution is anchored at the lower boundary $t_0 = 0$. The upper bound $t_1\in\qty(0,1)$ is the root of:
        \begin{equation*}
            (2k+1)(1-t_1)^{k+1} - (2k+2)(1-t_1)^k + 1 = 0.
        \end{equation*}
\end{corollary}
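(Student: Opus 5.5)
We will minimize the right-hand side of \cref{eq: lower bound of N} over the two parameters $0\le t_0<t_1\le 1$ of $t\sim U[t_0,t_1]$. The numerator $4\log(4n/\delta)$ does not depend on the masking distribution. So the task is equivalent to maximizing
\[
G(t_0,t_1)=\E[t]\,\bigl(\E[(1-t)^k]\bigr)^2 .
\]
The plan is to reduce $G$ to a function of the mean in case (i), and to push the interval to a boundary and then solve a one-variable first-order condition in case (ii).

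\textbf{Case $k=1$.} Here $\E[(1-t)^k]=1-\E[t]$, so with $\mu=\E[t]=(t_0+t_1)/2$ we get $G=\mu(1-\mu)^2$, which depends only on the mean. Differentiating gives $(1-\mu)(1-3\mu)=0$. On $(0,1)$ the unique maximizer is $\mu=1/3$. Every admissible interval with this mean, for instance $[0,2/3]$, is therefore optimal, which is claim (i).

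\textbf{Case $k>1$, reduction to $t_0=0$.} First fix the mean $\mu$ and vary the half-width of the interval. Since $g(t)=(1-t)^k$ is strictly convex for $k>1$, a wider uniform distribution with the same mean dominates a narrower one in the convex order. Hence $\E[g(t)]$ strictly increases with the width while $\E[t]=\mu$ stays fixed, and at the optimum the interval must touch a boundary: either $t_0=0$ or $t_1=1$.

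Next we rule out $t_1=1$ with $t_0>0$ by a directional derivative. Write $L=1-t_0$. On $[t_0,1]$ we have $\E[t]=(1+t_0)/2$ and $\E[g]=L^k/(k+1)$. Moreover
\[
\partial_{t_1}\E[t]=\tfrac12, \qquad \partial_{t_1}\E[g]=\frac{g(t_1)-\E[g]}{L}=-\frac{L^{k-1}}{k+1}.
\]
These give
\[
\partial_{t_1}\log G\big|_{t_1=1}=\frac{1}{1+t_0}-\frac{2}{1-t_0}<0 .
\]
So lowering $t_1$ strictly increases $G$, and the optimum has $t_0=0$.

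\textbf{Case $k>1$, optimizing $t_1$.} With $t_0=0$ and $u=1-t_1$, we have $\E[t]=t_1/2$ and $\E[(1-t)^k]=\frac{1-u^{k+1}}{(k+1)t_1}$. Hence $G\propto (1-u^{k+1})^2/(1-u)$. Setting the $u$-derivative of $\log G$ to zero gives
\[
\frac{1}{1-u}=\frac{2(k+1)u^k}{1-u^{k+1}},
\]
which rearranges to $p(u):=(2k+1)u^{k+1}-(2k+2)u^k+1=0$. This is exactly the stated equation in $1-t_1$.

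It remains to show that $p$ has a unique root in $(0,1)$ and that this root is a maximum.
\begin{itemize}
\item The endpoints satisfy $p(0)=1>0$ and $p(1)=0$, with $p'(1)=k+1>0$, so $p<0$ just below $u=1$.
\item Since $p'(u)=(k+1)u^{k-1}\bigl((2k+1)u-2k\bigr)$, $p$ decreases on $(0,\tfrac{2k}{2k+1})$ and increases on $(\tfrac{2k}{2k+1},1)$.
\item Together these give exactly one root in $(0,1)$.
\item The sign of the derivative of $\log G$ changes from positive to negative across this root, so it is the maximizer.
\end{itemize}
The trivial root $u=1$ corresponds to $t_1=0$ and is excluded. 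As a consistency check, for $k=1$ the same polynomial factors as $(3u-1)(u-1)$, which gives $t_1=2/3$ and mean $1/3$, in agreement with case (i).

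The main obstacle is the boundary reduction to $t_0=0$. The convex-order argument is standard, but it only confines the optimum to the boundary, and the explicit derivative computation at $t_1=1$ is needed to close the argument.
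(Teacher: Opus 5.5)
Your proof is correct, and it takes a genuinely different route from the paper's. The paper substitutes $x=1-t_0$, $y=rx$ and writes the objective as $\bigl(\frac{1-r^{k+1}}{1-r}\bigr)^2 x^{2k}\bigl(2-x(1+r)\bigr)$. It first maximizes over the scale $x$ at fixed ratio $r$; the peak $x^*(r)=\frac{4k}{(2k+1)(1+r)}$ is clamped to $x=1$ when $r<\frac{2k-1}{2k+1}$. It then proves, via a term-pairing inequality, that $\bigl(\frac{1-r^{k+1}}{1-r}\bigr)^2(1+r)^{-2k}$ is non-increasing for $k>1$ and constant for $k=1$. This forces $x=1$, i.e.\ $t_0=0$, after which the paper solves the same one-variable first-order condition you do. You instead fix the mean and note that symmetric widening is a mean-preserving spread. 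Strict convexity of $(1-t)^k$ then pushes any maximizer to $t_0=0$ or $t_1=1$, and you exclude $t_1=1$ with the one-sided derivative $\frac{1}{1+t_0}-\frac{2}{1-t_0}<0$.

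Your route explains \emph{why} the interval is anchored at $0$: width is a free gain in $\E[(1-t)^k]$ at fixed $\E[t]$. It also explains why $k=1$ is degenerate: $g$ is linear, so $\E[1-t]=1-\E[t]$ and only the mean matters, which is cleaner than the paper's appeal to constancy of $G(r)$. It avoids differentiating $G(r)$ altogether and would work for any strictly convex $g$. You also show that $(2k+1)u^{k+1}-(2k+2)u^k+1$ has a unique root in $(0,1)$ and that this root is the maximizer, by observing that $\frac{d}{du}\log G$ has the sign of this polynomial; the paper only sets the derivative to zero. The paper's parametrization, in turn, treats $k=1$ and $k>1$ in one framework and lands on $x=1$ without a separate boundary-exclusion step.

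One line is worth adding on existence. Your improvement arguments locate the maximizer only once one is known to exist. This follows by working on the compact set $\{0\le t_0\le t_1\le 1\}$ with $G$ extended continuously to degenerate intervals. Your widening argument already covers width zero, and this closure also matches the paper's allowance of a fixed $t=1/3$ in case (i), which your strict inequality $t_0<t_1$ excludes.
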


While \cref{cor:complexity_optimal_mask} provides a bound on the data volume required for success, it does not account for the clarity of the gradient during the learning process. To address this, we consider a second criterion based on \cref{thm:eff_loss} and \cref{thm:energy landscape}. This Signal-Optimal strategy aims to maximize the identifiable signal component $P_S$, effectively centering the training process on the masking rates that most clearly expose the underlying structure.

\begin{corollary}[Signal-Optimal Masking Rate]
\label{cor:signal_optimal_mask}
    Assume the masking rate follows a uniform distribution $t \sim U[t_0, t_1]$. The signal-noise ratio dictates the optimal strategy by maximizing $P_S$ as
    \begin{equation*}
        t_0=t_1=\frac{1}{k+1}.
    \end{equation*}
    
\end{corollary}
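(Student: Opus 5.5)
By \cref{thm:eff_loss}, the signal mass is $P_S = (k+1)\,\E_{t\sim U[t_0,t_1]}\qty[g(t)]$ with $g(t) := t(1-t)^k$. The plan is to maximize this average of $g$ over all admissible intervals $0\le t_0\le t_1\le 1$. The key observation is that an average never exceeds a supremum:
\begin{equation*}
\E_{t\sim U[t_0,t_1]}\qty[g(t)] \le \max_{t\in[0,1]} g(t).
\end{equation*}
Equality is possible only if $g(t)$ equals its maximum for almost every $t$ in the interval. This happens either when the interval shrinks to a maximizer, or when $g$ is constant on a nondegenerate interval, which is impossible for a nonconstant polynomial. So it suffices to locate the unique maximizer of $g$ on $[0,1]$ and treat the degenerate uniform distribution $U[t^*,t^*]$ as the point mass $\delta_{t^*}$. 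This is the reading under which $t_0=t_1$ is meaningful, understood as the limit of $U[t^*-\epsilon,t^*+\epsilon]$ as $\epsilon\to 0$.

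The maximizer comes from one derivative computation:
\begin{equation*}
g'(t) = (1-t)^{k-1}\qty(1-(k+1)t).
\end{equation*}
On $(0,1)$ the factor $(1-t)^{k-1}$ is positive, so $g'$ is positive for $t<1/(k+1)$ and negative for $t>1/(k+1)$. Also $g(0)=g(1)=0$ (for $k\ge1$). Hence $g$ is unimodal with a unique global maximum at $t^*=1/(k+1)$, where
\begin{equation*}
P_S^{\max} = (k+1)\,g(t^*) = \qty(\tfrac{k}{k+1})^k.
\end{equation*}

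To make the optimality statement precise, fix any interval of positive length. By continuity and uniqueness of the maximizer, $g<g(t^*)$ on a subset of positive measure, so the average is strictly below $g(t^*)$. If we want a limiting argument instead of the degenerate distribution, the centered intervals $[t^*-\epsilon,t^*+\epsilon]$ give $\E[g]\to g(t^*)$ as $\epsilon\to0$. Either way, the supremum of $P_S$ over the family is attained only at $t_0=t_1=1/(k+1)$.

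The only step that needs care is interpretive rather than computational: deciding how to treat the degenerate case $t_0=t_1$ within the ``uniform distribution'' family. I would handle it by stating explicitly that the family is closed under this limit, i.e.\ point masses are included. I would also note that $P_S$ is continuous in $(t_0,t_1)$ on the compact set $\{0\le t_0\le t_1\le1\}$, so the maximum is attained, and by the argument above it is attained only on the diagonal at $t^*$.
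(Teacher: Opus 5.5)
Your proposal is correct and follows essentially the same route as the paper: reduce to maximizing $g(t)=t(1-t)^k$, concentrate the uniform distribution at $\arg\max g$, and solve $g'(t)=(1-t)^{k-1}\bigl(1-(k+1)t\bigr)=0$ to get $t^*=1/(k+1)$. Your average-versus-maximum argument (strict on any nondegenerate interval) and your explicit treatment of $t_0=t_1$ as a point mass supply the justification that the paper only asserts, and your value $P_S^{\max}=\bigl(\tfrac{k}{k+1}\bigr)^k$ agrees with the paper's remark that this tends to $1/e$.
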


Please see \cref{app:proof of complexity optimal} and \cref{app: cor signal optimal mask} respectively for the whole proof. Notably, both objectives follow the functional form of $t(1-t)^C$. Consequently, the masking rate $t$ cannot be too small or too large, as the expression vanishes toward zero in both limits.

While both criteria provide theoretical instructions, we prioritize the signal-optimal strategy for our subsequent analysis. This choice reflects the fact that natural language is characterized by significant redundancy rather than a singular objective mapping, making signal maximization a more robust driver of learning efficiency and performance.

\subsection{Experiments on $k$-parity}
\label{sec:parity experiment}
We conduct experiments on the $(n,k) = (20,6)$ parity task using a standard Transformer architecture and cross-entropy loss to validate our theoretical framework. The model follows the nanoGPT implementation, incorporating self-attention, layer normalization, residual connections, and an MLP block. To isolate the effects of the diffusion objective, we set the weight decay $\eta = 0.1$ for all runs. The empirical results illustrated in \cref{fig:parity_combined} yield several key observations that confirm the dynamics of the energy landscape.

\begin{figure}[h] % t: top, p: page of floats
    \centering
    
    \begin{subfigure}{0.65\linewidth}
        \centering
        \includegraphics[width=\linewidth]{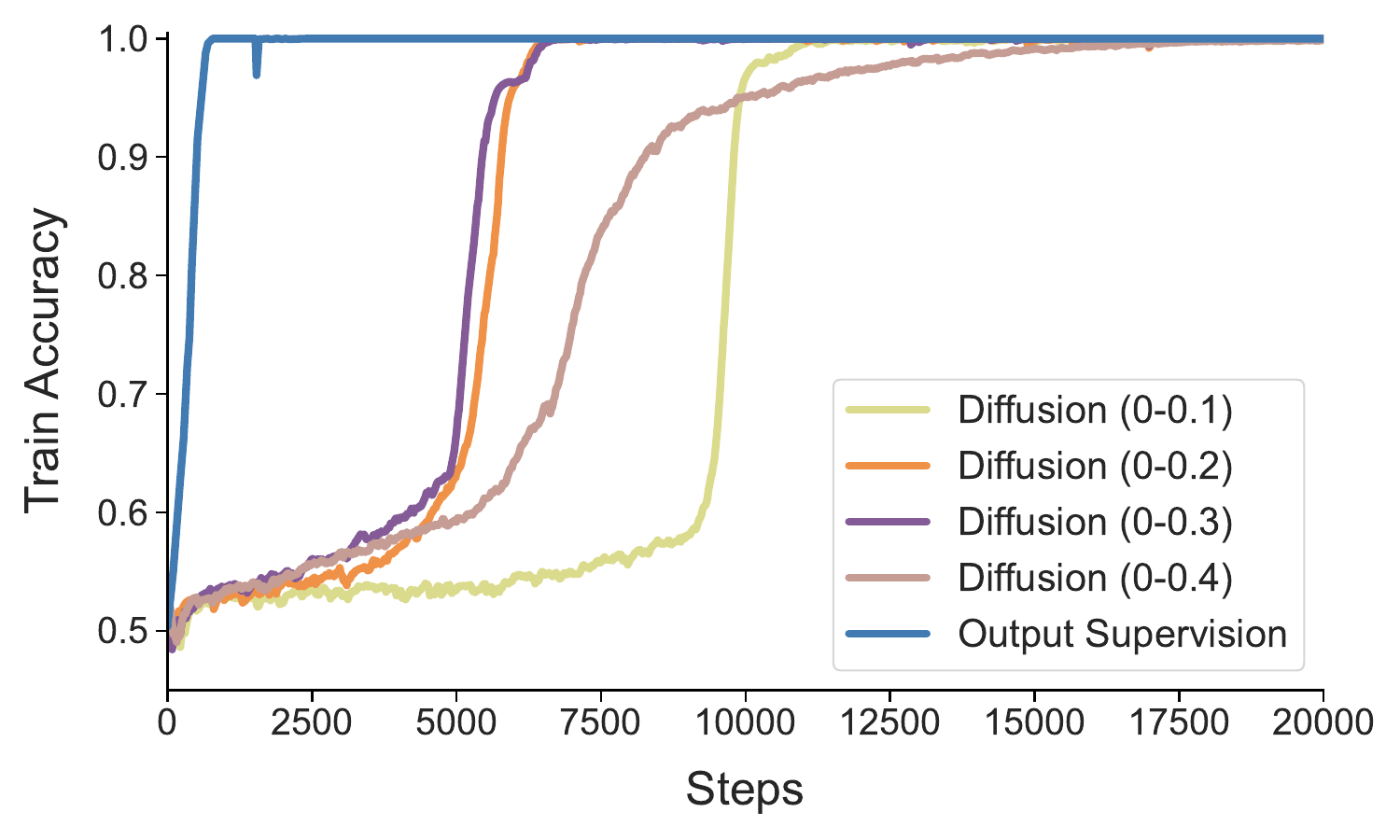} 
        % \caption{Training Accuracy}
        \label{fig:parity_train}
    \end{subfigure}
    
    % \vspace{-0.5cm} 
    \begin{subfigure}{0.65\linewidth}
        \centering
        \includegraphics[width=\linewidth]{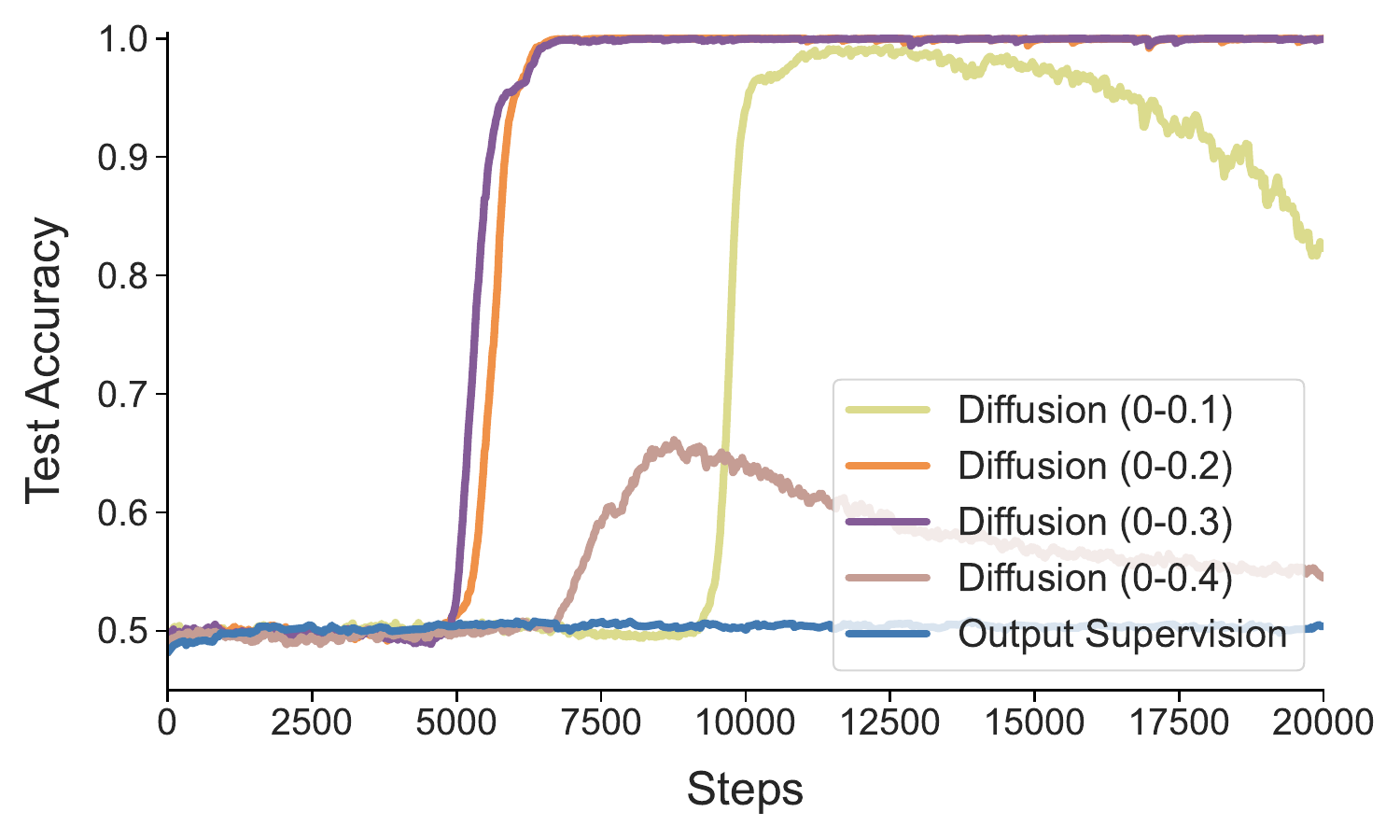}
        % \caption{Validation Accuracy}
        \label{fig:parity_val}
    \end{subfigure}
    % \vspace{-0.5cm}
    \caption{\textbf{Training and validation accuracy on $(n,k)=(20,6)$ parity.} Standard training (blue, directly supervising $y_{\x}$) exhibits classic grokking behavior; while training accuracy reaches $100\%$ rapidly, validation accuracy remains at chance level ($50\%$) for a prolonged period before eventually converging to $1$ (see full trajectory in \cref{app:full traj of parity}). In contrast, Masked Diffusion (purple, orange) enables near-simultaneous generalization. The fastest convergence is achieved near the theoretically predicted optimal range ($\mathcal{U}[0, 0.246]$). Boundary cases ($\mathcal{U}[0, 0.1]$ and $\mathcal{U}[0, 0.4]$) exhibit instability and slower convergence due to excessive regularization.}
    \label{fig:parity_combined}
    % \vspace{-0.5cm}
    
\end{figure}

\textbf{Elimination of Grokking and Simultaneous Generalization.} The blue curve, representing standard training, exhibits classic grokking behavior. While the model achieves $100\%$ training accuracy almost immediately, the validation accuracy remains at a baseline of $50\%$, indicating a failure to learn generalizable features in the early stage. In contrast, masked diffusion (purple and orange) facilitates near-simultaneous convergence of training and validation accuracy. This confirms that the MD objective provides an inherent regularization that boosts the feature learning process.

\vspace{-0.3cm}
\paragraph{Verification of the Signal-Optimal Range.}
The orange ($\mathcal{U}[0, 0.2]$) and purple ($\mathcal{U}[0, 0.3]$) curves exhibit the fastest convergence speeds, significantly outperforming other configurations. This empirical observation aligns closely with our theoretical derivation, which suggests that for a fixed $t_0=0$, the optimal range resides near the $\mathcal{U}[0, 0.246]$ range where $P_S=7\E_{t\sim U\qty[0,t_1]}\qty[t(1-t)^6]$ is maximized.
\vspace{-0.3cm}

\paragraph{Boundary Instability.}
The erratic performance of the yellow ($\mathcal{U}[0, 0.1]$) and brown ($\mathcal{U}[0, 0.4]$) curves reflects the optimization instability caused by insufficient signal. In both cases, $P_S$ is lower than in the orange and purple configurations. This lack of signal diminishes the update magnitude for $\W$, making the learning process more susceptible to sampling noise and slowing down overall convergence. Furthermore, excessive regularization drastically alters the loss landscape which obscures the true global minima and traps the model in suboptimal solutions.

\section{Signal-Rich Mask Sampling in Language Modeling}
We now translate these theoretical insights into practical improvements for generative language modeling. Our analysis of the $k$-parity problem reveals that regularization is a double-edged sword: while essential for generalization, excessive regularization distorts the optimization landscape, which not only impedes learning efficiency but also leads to suboptimal convergence. We argue that the $t \sim U[0, 1]$ distribution in natural language modeling may impose an over-regularizing effect that similarly degrades both training speed and model performance. Consequently, we hypothesize that the optimal training regime shifts significantly toward maximizing signal intensity.

This shift suggests that the extrema of the standard uniform sampling range, $t \sim \mathcal{U}[0, 1]$, contribute little to the underlying learning dynamics. At the lower bound ($t \to 0$), the task collapses into trivial reconstruction, yielding vanishing gradients that stall feature extraction. Conversely, at the upper bound ($t \to 1$), the input becomes information-theoretically void, forcing the model to predict marginal token distributions without the context necessary to learn inter-token dependencies.

To concentrate the model’s capacity on the most informative samples, we propose restricting the training to a Signal-Rich window, $t \in [t_{\min}, t_{\max}]$. By bypassing the computationally inefficient boundaries, we ensure that gradient updates are consistently driven by meaningful context. Under this framework, the training objective becomes the cross-entropy loss restricted to the signal-rich interval. Specifically, given a sequence $\x_0$, we sample a time step $t \sim \mathcal{U}[t_{\min}, t_{\max}]$ to construct the masked sequence $\x_t$, training the model $p_\theta$ to minimize
% \vspace{-0.15cm}
\begin{equation*}
\mathcal{L}\qty(\theta) = -\mathbb{E}_{t, \x_0, \x_t} \qty[ \frac{1}{t}\sum_{i=1}^L \indi\qty[x_t^i = M] \log p_\theta \qty(x_0^i \mid \x_t) ].\end{equation*}
% \vspace{-0.5cm}

Crucially, while our training objective is localized to the signal-rich range, we evaluate all models using the standard test loss averaged over the full interval $t \in [0, 1]$.

\subsection{Locating the Signal-Rich Window}
\label{sec: locating the signal-optimal window}
To validate our hypothesis regarding the inefficiency of extreme masking rates, we conducted an empirical ablation study using a 50M-parameter model adapted from the nanoGPT framework. The model was trained on the \texttt{WikiText} dataset \cite{merity2017pointer} with a batch size of $1024$ and a block size of $150$ for $6000$ steps. We partitioned the noise schedule $t \in [0, 1]$ into ten intervals of width $0.1$ (e.g., $t \in [0, 0.1]$, $t \in [0.1, 0.2]$, etc.). We then trained separate models where the masking ratio $t$ was sampled uniformly solely within each specific sub-interval.
% \vspace{-0.3cm}
\begin{figure}[h]
\centering\includegraphics[width=0.6\linewidth]{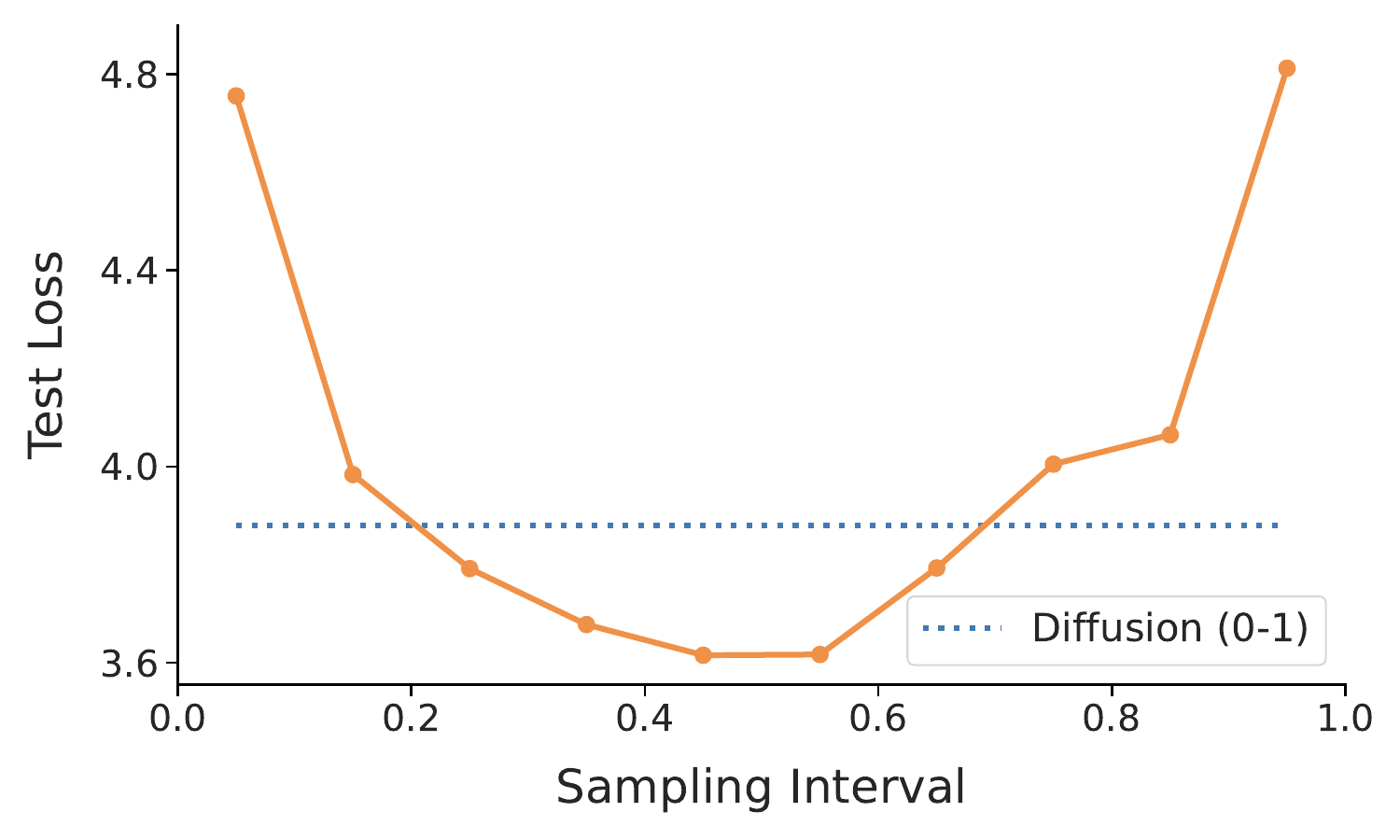}
\caption{\textbf{Test Loss vs. Masking Interval.} Evaluation of 50M-parameter models trained on restricted masking ranges of width $0.1$. The x-axis represents the midpoint of the sampling interval (e.g., $x=0.05$ corresponds to $t \sim \mathcal{U}[0, 0.1]$). The dashed blue line represents the baseline performance of standard full-range sampling ($t \sim \mathcal{U}[0,1]$). The U-shaped curve demonstrates that training exclusively within the "Signal-Rich" window ($t \in [0.4,0.5]$ or $[0.5,0.6]$) significantly outperforms the standard practice of sampling the full range. As a result, we use the range $t \in [0.45, 0.55]$ for the following 8B experiments.}
\label{fig:ar_loss_vs_t}
% \vspace{-0.2cm}
\end{figure}
The results, illustrated in \cref{fig:ar_loss_vs_t}, reveal a distinct U-shaped relationship between the masking interval and the final test loss. 
% For the whole test loss over time, 
Please refer to \cref{app: whole test loss over time} for full loss vs time curves.
% \vspace{-0.3cm}
\paragraph{Inefficiency at Extremes.}Models trained on intervals near the boundaries ($t \in [0, 0.1]$ or $t \in [0.9, 1.0]$) perform poorly, yielding test losses significantly worse than the baseline. This aligns with our theoretical insight: the signal would vanish if $t$ goes to the boundaries.
% \vspace{-0.3cm}

\paragraph{Superiority of the Middle Range.}The dashed black line indicates the performance of the standard baseline ($t \sim \mathcal{U}[0,1]$), which yields a test loss of approximately $3.88$. Crucially, models trained exclusively within the middle intervals ($t \in [0.4, 0.5]$ or $[0.5, 0.6]$) achieve losses as low as $3.62$.
This suggests that the standard practice of uniform sampling over $[0, 1]$ is suboptimal because it allocates significant computational budget to low-signal regimes. By restricting the diffusion schedule to the window where the signal-to-noise ratio is most informative, we can achieve superior generalization. \cref{app: ngram model of optimal t} gives a theoretical derivation that predicts this optimum directly from corpus statistics.

\subsection{Scalability to Large Language Models (8B)}
\subsubsection{Pretraining}

To further verify if the theoretical insights of the signal regime translate to larger scales, we utilize \texttt{dllm} framework \cite{zhou2026dllm} to pretrain two versions of \texttt{LLaDA-8B} architecture \citep{nie2025large} from scratch on \texttt{DCLM-baseline} dataset \cite{li2024datacomp}. Both models were trained with a batch size of $128$ and block size of $4096$ for $15,000$ steps. We prioritize the evolution of evaluation loss as our primary metric, as it provides a stable signal of optimization efficiency during the early stages of pre-training. As illustrated in Figure \ref{fig:llada_pretrain_loss}, the model trained exclusively within the signal-rich window ($t \in [0.45, 0.55]$, as suggested by \cref{fig:ar_loss_vs_t}) 
exhibits a markedly faster reduction in perplexity compared to the standard $\mathcal{U}[0, 1]$ baseline.

\begin{figure}[h]
\centering\includegraphics[width=0.6\linewidth]{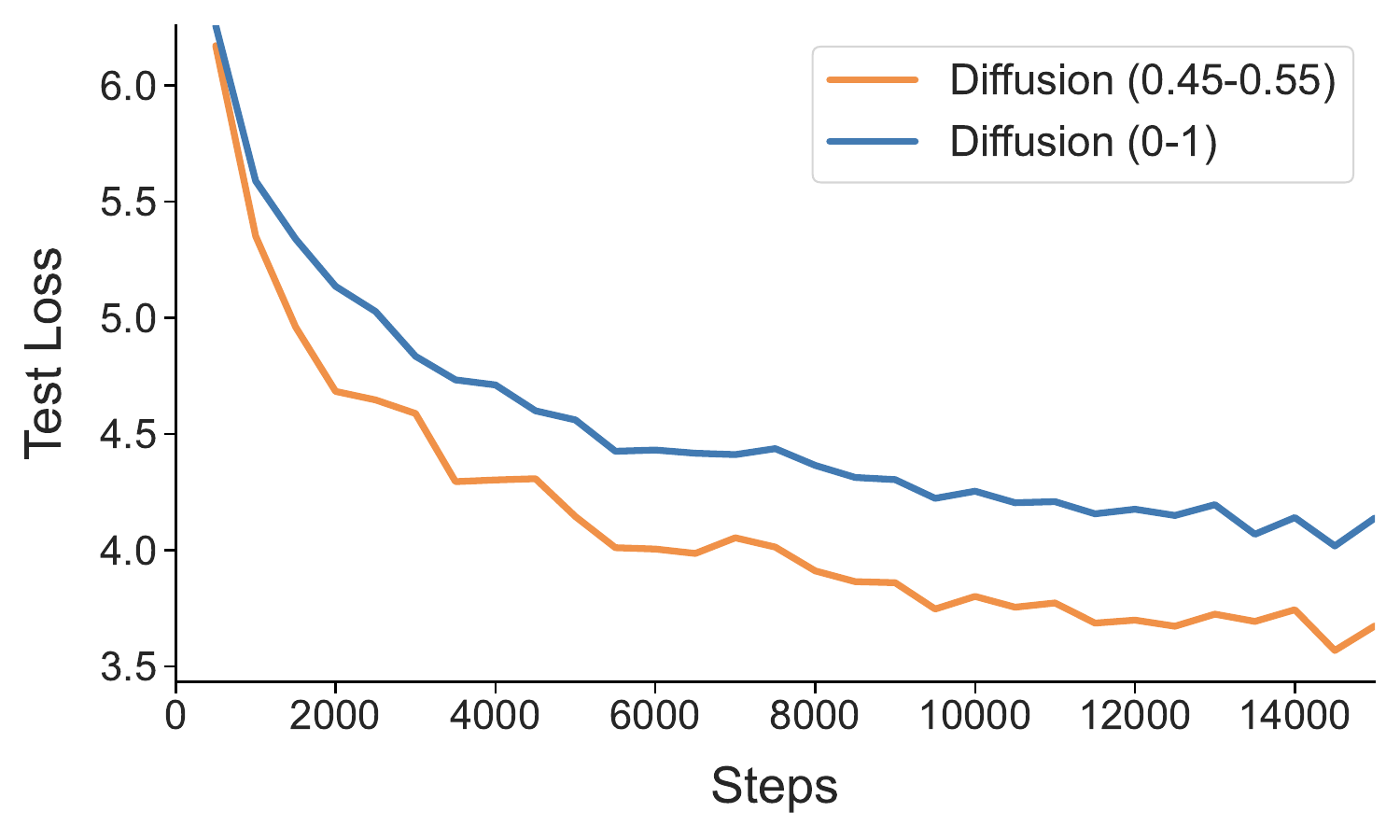}
\caption{\textbf{Pre-training Test Loss (LLaDA-8B).} Evolution of the held-out negative log-likelihood (NLL) over %the 15,000 
15K steps. The model trained with the restricted schedule $t \in [0.45, 0.55]$  (orange curve) consistently achieves lower loss compared to the standard $t \in [0, 1]$ baseline (blue curve), indicating superior training efficiency.\looseness=-1}
% \vspace{-0.3cm}
\label{fig:llada_pretrain_loss}
\end{figure}

To complement the loss dynamics, we select \texttt{HellaSwag} \cite{zellers2019hellaswag} and \texttt{ARC-Easy} \cite{clark2018think} for downstream evaluation. These specific benchmarks are chosen because they are uniquely sensitive to the initial acquisition of commonsense and factual associations, whereas more complex reasoning tasks often remain at chance-level performance during early pre-training. For details of inference, we kindly refer readers to \cref{app: large scale details}.

Consistent with our theoretical findings on the $k$-parity task, the model trained exclusively within the signal-rich window ($t \in [0.45, 0.55]$) demonstrates a significantly faster acquisition of downstream capabilities. As shown in Table \ref{tab:pt_results}, the signal-optimized model achieves a 4.6\% absolute improvement on \texttt{HellaSwag} and a substantial 8.8\% gain on \texttt{ARC-Easy} over the uniform baseline at the same training step. These results empirically validate that strategic mask sampling, informed by our loss decomposition, effectively concentrates the optimization effort on the most informative regimes, and accelerating the reduction of the test loss and improving the overall training efficiency of the MD model.  %masked diffusion model.
% \vspace{-0.09cm}
\begin{table}[h]
    \centering
    \caption{\textbf{Downstream Task Evaluation (Pre-training).} Comparison of zero-shot performance for LLaDA-8B models pretrained from scratch for 15,000 steps. We report accuracy on HellaSwag and ARC-Easy, as these benchmarks are sensitive to the emergence of basic linguistic features and common-sense reasoning during the early stages of pre-training. The signal-rich window ($t \in [0.45, 0.55]$) consistently outperforms the standard $\mathcal{U}[0, 1]$ baseline, reflecting higher training efficiency.}
    \resizebox{0.8\columnwidth}{!}{
    \begin{tabular}{lcc}
        \toprule
        \textbf{Method} & \textbf{HellaSwag} & \textbf{ARC-Easy} \\
        \midrule
        PT ($t \in [0, 1]$) & 0.354 & 0.342 \\
        \textbf{PT ($t \in [0.45, 0.55]$)} & \textbf{0.400} & \textbf{0.430} \\
        \bottomrule
    \end{tabular}
    }
    
    \label{tab:pt_results}
    % \vspace{-0.3cm}
\end{table}
\subsubsection{Supervised Fine-tuning}
\label{sec: supervised fine-tuning}
We further investigated whether these findings apply to fine-tuning. Starting from a pre-trained \texttt{LLaDA-8B Base} model, we performed SFT on the \texttt{tulu-3-sft-personas-math-filtered} dataset \cite{lambert2025tulu} (See \cref{app: supervised fine-tuning} for details) for 1,200 steps (approx. 4 epochs) with a batch size of 256 and block size 1,024. For details of inference, we kindly refer readers to \cref{app: large scale details}.

% \vspace{-0.3cm}
\paragraph{Evaluation on PPL-based Discriminative Tasks.}
We first evaluate our models on multiple-choice benchmarks including \texttt{MMLU} \cite{hendrycks2021measuring}, \texttt{ARC-Challenge}, and \texttt{GPQA} \cite{rein2024gpqa}. As shown in \cref{tab:sft_results}, the results validate our ``signal-rich" hypothesis. Our signal-rich window ($t \in [0.45, 0.55]$) demonstrates superior performance across all evaluated benchmarks compared to the standard $\mathcal{U}[0, 1]$ baseline. Most notably, in the context of knowledge-intensive reasoning (e.g., GPQA), our method achieves an accuracy of 0.402. This represents a substantial improvement over both the LLaDA Base model (0.252) and the vanilla SFT approach (0.344), which confirms that concentrating on the most informative levels enables the model to internalize complex patterns more efficiently.

\begin{table}[h]
    \centering
    \caption{\textbf{Downstream Task Evaluation on PPL-based Discriminative Tasks (SFT).} All models were fine-tuned on a mathematical dataset. For multiple-choice benchmarks, the signal-rich window ($t \in [0.45, 0.55]$) consistently outperforms the uniform $t \in [0, 1]$ baseline, demonstrating better retention of general knowledge and more effective domain-specific learning.}
    \resizebox{\columnwidth}{!}{
    \begin{tabular}{lccccc}
        \toprule
        \textbf{Method} & \textbf{MMLU} &\textbf{MMLU-stem}   & \textbf{ARC-Challenge}  & \textbf{GPQA} \\
        \midrule
        LLaDA Base & 0.659 & 0.629 & 0.459  & 0.252\\
        SFT ($t \in [0, 1]$) & 0.659 & 0.621 & 0.468  & 0.344 \\
        \textbf{SFT ($t \in [0.45, 0.55]$)} & \textbf{0.669} & \textbf{0.635} & \textbf{0.480}  & \textbf{0.402} \\
        \bottomrule
    \end{tabular}}
    
    \label{tab:sft_results}
    % \vspace{-0.5cm}
\end{table}

% \vspace{-0.25cm}

\paragraph{Evaluation on Generative Tasks.}
Interestingly, we observe a performance divergence when moving to generative benchmarks. As illustrated in \cref{tab:gen_task}, the ``signal-rich" window ($t \in [0.45, 0.55]$) that excelled in discriminative tasks actually underperforms the uniform $\mathcal{U}[0, 1]$ baseline on \texttt{GSM8K} \cite{cobbe2021training} and \texttt{MATH} \cite{hendrycks2021measuring_2}. We argue that for complex reasoning, while the $t \to 0$ regime remains uninformative, the near-total mask regime (where $t \to 1$) is not merely informative but essential. Intuitively, solving a difficult mathematical problem becomes significantly trivialized if even minimal ``hints" (i.e., unmasked tokens) are provided. If a model is trained primarily on mid-range masking, it may learn to rely on these contextual anchors rather than developing the ability to ``solve from scratch." To verify this, we conducted a range ablation by gradually shifting the sampling interval towards $t=1$. As shown in \cref{tab:gen_task}, performance scales positively as we emphasize higher masking ratios, with the $[0.5, 1.0]$ configuration yielding the best results. This demonstrates that mastering logical reconstruction under extreme information scarcity is the key to superior generative performance, whereas the mid-range window is better suited for static knowledge recognition.

\begin{table}[ht]
\centering
\caption{\textbf{Generative Task Performance vs. Masking Range.} Evaluation of SFT models on \texttt{GSM8K} and \texttt{MATH}. Results show a positive correlation between performance and the emphasis on high-noise intervals. Shifting the sampling range toward $t=1$ (e.g., $[0.5, 1.0]$) significantly outperforms the mid-range "signal-rich" window, demonstrating that mastering structural reconstruction is essential for complex generative reasoning.}
\label{tab:gen_task}
\resizebox{\columnwidth}{!}{
\begin{tabular}{lcccccc}
\toprule
\textbf{Task} & {Base} & \textbf{$[0.45,0.55]$} & \textbf{$[0,1]$}& \textbf{$[0.2,1]$}& \textbf{$[0.3,1]$}& \textbf{$[0.5,1]$}  \\ 
\midrule
\textbf{GSM8K} & 0.703 & 0.738 & 0.768 & 0.762 & 0.774 & \textbf{0.785} \\
\textbf{MATH}  & 0.314 & 0.342 & 0.341 & 0.358 & 0.360 & \textbf{0.375} \\
\bottomrule
\end{tabular}
}
% \vspace{-0.5cm}
\end{table}

\section{Conclusion and Limitation}
In this work, we characterized the structural properties of the MD objective, identifying a fundamental decomposition into a Signal Regime and an information-theoretically obscured Noise Regime. We proved that the latter functions as a powerful implicit regularizer, enabling models to bypass the grokking plateau and achieve rapid generalization on complex discrete structures like the $k$-parity problem. Furthermore, we demonstrated the scalability of these insights, showing that our strategy significantly improves performance at both the 50M and 8B parameter scales.

Despite these gains, this study has limitations that warrant future exploration. Our exploration was primarily limited to restricted uniform intervals of width $0.1$ while complex non-linear or curriculum-based schedules may yield further improvements. We hope this work encourages further research into the training dynamics of discrete diffusion models.

% Acknowledgements should only appear in the accepted version.
\section*{Acknowledgements}
This research was supported in part by the NSF CAREER Award 2146492, NSF-Simons AI Institute for Cosmic Origins (CosmicAI), and NSF AI Institute for Foundations of Machine Learning (IFML). JH thanks Zixuan Wang for sharing the blog post that became the starting point of this work, and Yunwei Ren for insightful discussions.
% \textbf{Do not} include acknowledgements in the initial version of the paper
% submitted for blind review.

% If a paper is accepted, the final camera-ready version can (and usually should)
% include acknowledgements.  Such acknowledgements should be placed at the end of
% the section, in an unnumbered section that does not count towards the paper
% page limit. Typically, this will include thanks to reviewers who gave useful
% comments, to colleagues who contributed to the ideas, and to funding agencies
% and corporate sponsors that provided financial support.

\section*{Impact Statement}
This paper presents work whose goal is to advance the field of Machine
Learning. There are many potential societal consequences of our work, none
which we feel must be specifically highlighted here.
% In the unusual situation where you want a paper to appear in the
% references without citing it in the main text, use \nocite
% \nocite{langley00}

\bibliography{icml2026}
\bibliographystyle{icml2026}

%%%%%%%%%%%%%%%%%%%%%%%%%%%%%%%%%%%%%%%%%%%%%%%%%%%%%%%%%%%%%%%%%%%%%%%%%%%%%%%
%%%%%%%%%%%%%%%%%%%%%%%%%%%%%%%%%%%%%%%%%%%%%%%%%%%%%%%%%%%%%%%%%%%%%%%%%%%%%%%
% APPENDIX
%%%%%%%%%%%%%%%%%%%%%%%%%%%%%%%%%%%%%%%%%%%%%%%%%%%%%%%%%%%%%%%%%%%%%%%%%%%%%%%
%%%%%%%%%%%%%%%%%%%%%%%%%%%%%%%%%%%%%%%%%%%%%%%%%%%%%%%%%%%%%%%%%%%%%%%%%%%%%%%
\newpage
\appendix
\onecolumn

\section{More related works of MDLM}
\label{app: recent improvements}
\subsection{Training and Fine-tuning}
Recent improvements typically focus on the spatial or token-wise importance within a sequence. \citet{kosmopoulou2025masked} assigns higher masking probabilities or loss weights to rare tokens. The intuition is to prioritize learning from infrequent but semantically rich tokens, preventing the model from being biased toward common function words. Dream \cite{ye2025dream} proposes reweighting each masked position based on the intuition that clean tokens in the neighborhood should exert a greater influence on the prediction of masked tokens, thereby improving context comprehension. Other works propose mask-agnostic objectives to enforce prediction invariance to mask counts \cite{piskorz2025masks} or utilize Reinforcement Learning (RL) for fine-tuning \cite{zhao2025d1,zhao2025inpainting,wang2025time,he2025mdpo,gong2025diffucoder,zhao2025diffpo}.

A closely related direction reweights the objective along the temporal (noise-level) dimension rather than the spatial one. \citet{shi2025demystifying} interpret reweighted losses as a cascade of time-dependent variational bounds that provably tighten the ELBO, yielding a reweighting scheme that applies to both continuous Gaussian and discrete (masked) diffusion. Our analysis offers a distinct and \emph{complementary} perspective to this loss-centric view. Whereas \citet{shi2025demystifying} justify reweighting by constructing tighter bounds, such bounds do not capture the actual optimization dynamics, which differ fundamentally between modalities. This distinction is consequential: as illustrated by the grokking phenomenon in \cref{fig:parity_combined}, attaining an exact (or better) loss does not necessarily translate into a better learning process. Our signal-to-noise analysis supplies this missing link, showing that adjusting the masking ratio $t$---which itself corresponds to a reweighting schedule---inherently acts as a regularizer (\cref{thm:eff_loss}) and controls the effective learning rate (\cref{thm:energy landscape}). These optimization-level effects are not recoverable from loss bounds such as \citet{shi2025demystifying} alone, and the resulting theoretically derived $t$ improves not only final performance but also convergence and training efficiency (Figure~\ref{fig:parity_combined}).

\subsection{Sampling during Inference}
Another line of research focuses on improving the generation process at test-time. This includes optimizing unmasking\cite{hong2025improving,jazbec2025learning,wu2025fast,huang2026empiricalanalysisdecodingbiases,wang2025time,kim2025train,azangulov2025parallel} and remasking \cite{hong2025wide,dong2025saber,he2025mdpo} strategies to find better decoding trajectories. Since these are test-time optimizations aimed at improving generation quality, they do not address the training-time signal-to-noise dynamics analyzed in our work.
\subsection{Hybrid Architectures}
Recent works explore the combination of ARMs and DLMs. For instance, TiDAR \cite{liu2025tidar} introduces a ``Thinking-Talking" paradigm, utilizing diffusion for parallel drafting and AR-logic for sampling within a single forward pass via structured attention masks. Similarly, Block Diffusion \cite{arriola2025block} and D2F \cite{wang2025diffusion} interpolates between the two by performing discrete diffusion at a block level, supporting flexible-length generation and KV caching. While these methods innovate on the sequence-level structure and drafting efficiency, their diffusion components still utilize standard training objectives and timestep schedules.

\section{Proof of \cref{thm:information bound} and \cref{cor:complexity_optimal_mask}}

\subsection{Proof of \cref{thm:information bound}}
\label{app:proof of information bound}

To derive the sample complexity bound, we first formalize the notion of the theoretical minimum loss (Bayes Risk) for a specific feature $i$ under the stochastic masking process.

\begin{definition}[Bayes Optimal Risk]
    For a target bit $x'_i$ and a masking rate $t$, let $\tilde{\x}$ denote the corrupted input vector. The Bayes optimal risk $R_i^*(t)$ is defined as the minimum expected cross-entropy loss achieving by any function $f$:
    \begin{equation*}
        R_i^*(t) = \inf_{f} \mathbb{E}_{\mathbf{m} \sim t} \left[ \text{CE}(x'_i, f(\tilde{\x})) \mid i \in M_{\mathbf{m}} \right] = \mathbb{E}_{\mathbf{m} \sim t} \left[ H(x'_i \mid \tilde{\x}_{\setminus i}) \right].
    \end{equation*}
    where $H(x'_i \mid \tilde{\x}_{\setminus i})$ is the conditional entropy of the masked bit given the visible tokens.
\end{definition}

We can explicitly calculate $R_i^*(t)$ for noise and signal features.

\begin{enumerate}

\item Noise Features ($i \notin \+S'$).

Since the input bits are drawn uniformly $x_j \sim \text{Unif}\{-1, 1\}$, a noise bit is statistically independent of all other bits. The posterior distribution is always uniform: $P(x'_i=1 | \tilde{\x}) = 0.5$. The entropy is maximal:
\begin{equation*}
    R_i^*(t) = -\left( \frac{1}{2} \log \frac{1}{2} + \frac{1}{2} \log \frac{1}{2} \right) = \log 2.
\end{equation*}

\item Parity Features ($i \in \+S'$).

The parity bit is deterministic given the other $k$ bits in the secret set $\+S'$. In the identifiable case where all other $k$ bits in $\+S' \setminus \{i\}$ are unmasked (which happens with probability $(1-t)^k$), the conditional entropy is 0. Otherwise, if any of the other $k$ bits are masked, the partial view gives no information about the parity. The posterior reverts to uniform, and the entropy is $\log 2$.
\end{enumerate}

Thus, the expected risk is:
\begin{equation*}
    R_i^*(t) = (1-t)^k \cdot 0 + (1 - (1-t)^k) \cdot \log 2 = (1 - (1-t)^k) \log 2.
\end{equation*}

With these definitions established, we proceed to the proof.

We restate the following theorem.
\begin{theorem}[Sample Complexity for Support Identification]
Let $\mathcal{D}'$ be a dataset of $N$ samples subjected to the stochastic masking process in \cref{def:stochastic masking}. To ensure that the secret set $S$ is uniquely identifiable by the marginal loss gap with probability at least $1-\delta$, the number of samples $N$ must satisfy:
\begin{equation*}
    N \geq \frac{4 \log(4n/\delta)}{\mathbb{E}[t] \cdot \qty(\mathbb{E}[(1-t)^k])^2}.
\end{equation*}
where $k=|\+S|$ is the size of the secret set (excluding the parity bit itself).
\end{theorem}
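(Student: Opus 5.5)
We will prove the statement as a sufficient condition. The identification procedure is a ``marginal loss gap'' test: for each position $i\in[n']$, estimate the empirical risk of the Bayes-optimal predictor of $x'_i$ from the masked data, and declare $i\in\+S'$ exactly when the estimated risk falls clearly below $\log 2$.

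\textbf{Step 1: the population gap.} By the computation above, noise positions have $R_i^*(t)=\log 2$ for every $t$. Secret positions have
\begin{equation*}
R_i^*(t)=(1-(1-t)^k)\log 2 .
\end{equation*}
Averaging over $t\sim U[t_0,t_1]$ therefore gives a per-masked-token gap
\begin{equation*}
\Delta=\log 2\cdot \E[(1-t)^k]
\end{equation*}
between secret and noise positions. The key observation is that position $i$ contributes a loss term only when it is masked, which happens with probability $\E[t]$. The gap in the per-sample contribution $\E[t\,R_i^*(t)]$ is thus of order $\E[t]\,\E[(1-t)^k]$. Equivalently, the number of informative events, namely samples where $i$ is masked and the other $k$ secret bits are visible, has mean $N\E[t(1-t)^k]$. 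The paper's bound instead uses the product form $\E[t]\E[(1-t)^k]$, which we obtain by treating the event ``$i$ masked'' and the identifiability event through their separate marginals. We will state this approximation explicitly. Alternatively, we can phrase the test as conditioning on the $N_i\approx N\E[t]$ samples in which $i$ is masked, and then working with the gap $\Delta$ per such sample.

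\textbf{Step 2: concentration.} On the samples where $i$ is masked, the per-sample excess loss (conditional entropy, or the 0/1 indicator of identifiability) lies in a bounded range $[0,\log 2]$.

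We first handle the number of masked samples. By a multiplicative Chernoff bound, $N_i\ge N\E[t]/2$ with probability $1-\delta/(2n')$ once $N\E[t]\gtrsim\log(n/\delta)$. This condition is implied by the target bound.

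Next we handle the empirical risk. Conditional on $N_i$, Hoeffding's inequality places it within $\Delta/2$ of its mean with failure probability $2\exp(-N_i\Delta^2/(2\log^2 2))=2\exp(-N_i(\E[(1-t)^k])^2/2)$. Substituting $N_i\ge N\E[t]/2$, it suffices that
\begin{equation*}
N\E[t](\E[(1-t)^k])^2/4\ge \log(4n/\delta),
\end{equation*}
which is exactly \cref{eq: lower bound of N}. The constants $4$ in the numerator and the $4n$ inside the log come from combining the factor $1/2$ of Chernoff with the $\Delta/2$ threshold, and from a union bound over the $n+1\le 2n$ positions with two failure events each.

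\textbf{Step 3: conclusion.} A union bound over all positions shows that with probability at least $1-\delta$, every secret position has estimated risk below $\log 2-\Delta/2$ and every noise position above it. Thresholding then recovers $\+S'$, and hence $\+S$.

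The main obstacle is Step 1. We must make the reduction from the per-position risk to a cleanly separated statistic rigorous, and estimating the Bayes risk itself (rather than a fixed predictor's loss) requires knowing $\+S$. We would circumvent this by using the indicator statistic ``the parity of the other candidate visible bits correlates with $x'_i$''. Failing that, we simply present the argument as identification given oracle access to the conditional risks, which matches the ``marginal loss gap'' phrasing of the restated theorem.
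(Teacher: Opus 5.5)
Your proposal follows the paper's proof essentially line by line: the Bayes-risk gap $\Delta=\rho_k\log 2$ with $\rho_k=\E[(1-t)^k]$, Hoeffding at tolerance $\Delta/2$ on the samples where position $i$ is masked, a multiplicative Chernoff bound giving $m_i\ge\frac12 N\E[t]$, and union bounds, with the same constants. Your fallback to oracle access to the conditional risks is also what the paper tacitly assumes.

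\textbf{The gap.} The step you flag in Step~1 as an ``approximation'' is exactly where the paper is silent: it takes $\E_{t\sim U[t_0,t_1]}[R_i^*(t)]$ as the mean of the empirical loss over the masked samples. You are right that this is a genuine gap, not a cosmetic one. Conditioned on $i\in M_{\m}$, the mask rate is size-biased (density proportional to $t$). So a secret position's empirical risk has mean $\log 2\,\bigl(1-\E[t(1-t)^k]/\E[t]\bigr)$, and the true gap is $\Delta'=\log 2\cdot\E[t(1-t)^k]/\E[t]$.

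Because $t$ is increasing and $(1-t)^k$ is decreasing, Chebyshev's integral inequality gives $\E[t(1-t)^k]\le\E[t]\,\E[(1-t)^k]$. Hence $\Delta'\le\Delta$, with equality only for deterministic $t$. The approximation therefore errs in the unfavorable direction and cannot be replaced by an inequality pointing the right way.

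It can even break the test itself. For $t\sim U[0,1]$ one gets $\Delta'/\Delta=2/(k+2)$. So for $k\ge 3$ the secret positions' risks concentrate strictly above your threshold $\log 2-\Delta/2$ and are declared noise; for $k=2$ the mean sits exactly on the threshold.

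Running your argument with threshold $\log 2-\Delta'/2$ and tolerance $\Delta'/2$ gives the sufficient condition
\begin{equation*}
N\;\ge\;\frac{4\log(4n/\delta)\,\E[t]}{\bigl(\E[t(1-t)^k]\bigr)^2}.
\end{equation*}
This is never smaller than the stated bound and coincides with it only for constant $t$. It is consistent with your own remark that the informative events have mean $N\E[t(1-t)^k]$. So the stated constant is not established by this route, yours or the paper's.

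\textbf{Bookkeeping.} Two smaller points, both also shared with the paper.
\begin{enumerate}
\item The Chernoff requirement $N\E[t]\ge 8\log(2n/\delta)$ is not implied by the target bound with these constants once $\rho_k^2$ exceeds about $1/2$. The paper simply drops it when ``combining the requirements''; it must be kept as a separate condition.
\item The failure probabilities sum to at most $\delta$ only if you test the $n$ input positions, which suffices since $\+S\subset[n]$. That gives $\delta/2$ for Hoeffding and $\delta/2$ for Chernoff. With all $n+1$ positions and your per-position Chernoff allotment $\delta/(2n')$, the total is $\delta/2+(n+1)\delta/(2n)>\delta$.
\end{enumerate}
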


\begin{proof}
The goal is to identify the secret set $\+S$ by exploiting the difference in expected loss between parity features and noise features. For a feature $i$, we define the theoretical minimum reconstruction loss under masking as $R_i^*(t)$.

If $i \notin \+S'$, the bit is independent of all other bits; hence, when masked, it is completely unpredictable:
$$R_i^*(t) = \log 2.$$
If $i \in \+S'$, the bit is determined by the other $k$ bits in $\+S'$. It is recoverable if and only if all other $k$ bits are unmasked (which occurs with probability $(1-t)^k$). Otherwise, it is unpredictable:
$$R_i^*(t) = (1 - (1-t)^k) \log 2.$$
Averaging over $t$, we define the expected loss $\Bar{R_i^*} = \mathbb{E}_t[R_i^*(t)]$. The gap between noise and signal features is:
\begin{equation*}
    \Delta = \Bar{R}_{i \notin S}^* - \Bar{R}_{i \in S}^* = \log 2 - (1 - \rho_k)\log 2 = \rho_k \log 2,
\end{equation*}
where $\rho_k = \mathbb{E}_t[(1-t)^k]$.

Let $m_i$ be the number of times feature $i$ is masked in the dataset. Let $\hat{R_i}$ be the empirical cross-entropy loss computed on these $m_i$ samples. By Hoeffding's inequality, the probability that the empirical loss deviates from the true mean by more than a tolerance $\epsilon$ is:
\begin{equation*}
    \Pr\qty(\abs{\hat{R_i}-\Bar{R_i^*}}\ge\epsilon) \le 2\exp\qty(-\frac{2m_i\epsilon^2}{(\log 2)^2}).
\end{equation*}
To reliably distinguish signal from noise, we set the tolerance $\epsilon = \Delta/2$. If this condition holds, the error intervals for signal and noise features will not overlap.

We require this condition to hold simultaneously for all $n$ features with probability at least $1 - \delta/2$. Applying a union bound:
\begin{equation*}
    \Pr\qty(\exists i, \abs{\hat{R_i}-\Bar{R_i^*}}\ge\frac{\Delta}{2})\le 2n\exp\qty(-\frac{2m_{\min}(\Delta/2)^2}{(\log 2)^2}) \le \frac{\delta}{2}.
\end{equation*}
Substituting $\Delta = \rho_k \log 2$ and solving for the minimum required mask count $m_{\min}$:
\begin{align*}
    2n \exp\qty(-\frac{1}{2} m_{\min} \rho_k^2) &\le \frac{\delta}{2} \\
    -\frac{1}{2} m_{\min} \rho_k^2 &\le \ln\qty(\frac{\delta}{4n}) = -\ln\qty(\frac{4n}{\delta}) \\
    m_{\min} &\ge \frac{2}{\rho_k^2} \ln\qty(\frac{4n}{\delta}).
\end{align*}

Finally, we ensure that the dataset size $N$ is large enough such that every feature is masked at least $m_{\min}$ times. The number of masks $m_i$ for any bit follows a binomial distribution $m_i \sim \text{Bin}(N, \mathbb{E}[t])$.

We apply the multiplicative Chernoff bound to bound the probability that $m_i$ falls below half its expectation, i.e., $m_i \le \frac{1}{2} N \mathbb{E}[t]$:
\begin{equation*}
    \Pr\qty(m_i \le \frac{1}{2}N\E\qty[t]) \le \exp\qty(-\frac{1}{8}N\E\qty[t]).
\end{equation*}
Applying a union bound over all $n$ features, we require this failure probability to be at most $\delta/2$:
\begin{equation*}
    n \exp\qty(-\frac{1}{8}N\E\qty[t]) \le \frac{\delta}{2} \implies N \ge \frac{8}{\E\qty[t]}\ln\qty(\frac{2n}{\delta}).
\end{equation*}

Combining the requirements, we have
\begin{equation*}
    \frac{1}{2} N \mathbb{E}[t] \ge \frac{2}{\rho_k^2} \ln\qty(\frac{4n}{\delta}).
\end{equation*}
Solving for $N$ yields the final bound:
\begin{equation*}
    N \ge \frac{4 \ln(4n/\delta)}{\mathbb{E}[t] \cdot \rho_k^2}.
\end{equation*}
This completes the proof.
  
\end{proof}

\subsection{Proof of \cref{cor:complexity_optimal_mask}}
\label{app:proof of complexity optimal}

We restate the corollary as follows.
\begin{corollary}[Sample Complexity-Optimal Masking Rate]
    Assume the masking rate follows a uniform distribution $t \sim U[t_0, t_1]$. The sample complexity objective dictates the optimal strategy based on the dependency order $k$:

    \begin{enumerate}
        \item \textbf{Simple Dependencies ($k = 1$):} Any uniform distribution is optimal provided the mean is:
        \begin{equation*}
            \mathbb{E}[t] = \frac{1}{3}.
        \end{equation*}
        This allows for any range $[t_0, t_1]$ satisfying $t_0 + t_1 = 2/3$ (e.g., fixed $t=1/3$ or $U[0, 2/3]$).

        \item \textbf{Complex Dependencies ($k > 1$):} The optimal distribution is anchored at the lower boundary $t_0 = 0$. The upper bound $t_1\in\qty(0,1)$ is the root of:
        \begin{equation*}
            (2k+1)(1-t_1)^{k+1} - (2k+2)(1-t_1)^k + 1 = 0.
        \end{equation*}
    \end{enumerate}
\end{corollary}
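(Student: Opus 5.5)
The plan is to maximize the denominator of \cref{eq: lower bound of N}, namely $G(t_0,t_1)=\E[t]\cdot\rho_k^2$ with $\rho_k=\E[(1-t)^k]$, over $0\le t_0\le t_1\le 1$. The numerator $4\log(4n/\delta)$ does not depend on the schedule, so minimizing the bound in \cref{thm:information bound} is equivalent to maximizing $G$. For $t\sim U[t_0,t_1]$ we have $\E[t]=(t_0+t_1)/2$. For $t_0<t_1$,
\begin{equation*}
\rho_k=\frac{(1-t_0)^{k+1}-(1-t_1)^{k+1}}{(k+1)(t_1-t_0)},
\end{equation*}
and $\rho_k=(1-t_0)^k$ when $t_0=t_1$. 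I would parametrize each schedule by its mean $\mu=\E[t]$ and its half-width.

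\textbf{Case $k=1$.} Here $\rho_1=1-\E[t]=1-\mu$ by linearity, so $G=\mu(1-\mu)^2$ depends only on the mean. Differentiating gives $(1-\mu)(1-3\mu)=0$, so the maximizer on $[0,1]$ is $\mu=1/3$. Every interval with $t_0+t_1=2/3$ is therefore optimal, which is part (i).

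\textbf{Case $k>1$, reduction to $t_0=0$.} Fix the mean $\mu$. The map $t\mapsto(1-t)^k$ is strictly convex on $[0,1]$. A uniform law on a wider interval with the same mean dominates a narrower one in convex order, so $\rho_k$ strictly increases with the width while $\E[t]$ stays fixed. Hence, for each $\mu$, the optimum uses the widest admissible interval.
\begin{itemize}
\item If $\mu\le 1/2$, the widest interval is $[0,2\mu]$, so $t_0=0$.
\item If $\mu>1/2$, the widest interval is $[2\mu-1,1]$.
\end{itemize}
I must rule out the second branch. On $[2\mu-1,1]$ we get $\rho_k=(2-2\mu)^k/(k+1)$, so $G=\mu(2-2\mu)^{2k}/(k+1)^2$. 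This is decreasing in $\mu$ on $(1/2,1]$ because the critical point $\mu=1/(2k+1)$ lies below $1/2$. So its supremum on this branch is the value at $\mu=1/2$, which is the interval $[0,1]$, and that interval already belongs to the $t_0=0$ family. This branch-comparison step is where I expect the main care to be needed. The backup is to check directly that $\partial G/\partial t_0<0$ on the region $t_0>0$.

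\textbf{Case $k>1$, optimizing $t_1$.} With $t_0=0$ and $t_1=s$,
\begin{equation*}
G(s)=\frac{\bigl(1-(1-s)^{k+1}\bigr)^2}{2(k+1)^2 s}.
\end{equation*}
Setting $\frac{d}{ds}\log G=0$ gives $2(k+1)s(1-s)^k=1-(1-s)^{k+1}$. Substituting $u=1-s$ and rearranging yields
\begin{equation*}
h(u):=(2k+1)u^{k+1}-(2k+2)u^k+1=0,
\end{equation*}
which is the stated equation. To show there is exactly one admissible root, I would use the following facts.
\begin{itemize}
\item $h(0)=1>0$ and $h(1)=0$, but $u=1$ corresponds to the degenerate $s=0$.
\item $h'(1)=k+1>0$, so $h<0$ just below $u=1$, and by the intermediate value theorem $h$ has a root in $(0,1)$.
\item $h'(u)=(k+1)u^{k-1}\bigl((2k+1)u-2k\bigr)$ has a single sign change on $(0,1)$, so $h$ decreases and then increases there. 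Hence this root is unique.
\end{itemize}
At the root $G'$ changes sign from positive to negative, since $G(s)\to0$ as $s\to0^+$, so it is the global maximizer on $(0,1]$. A consistency check is that for $k=1$ the polynomial is $h(u)=(3u-1)(u-1)$, giving $s=2/3$ and the mean $1/3$ from part (i).
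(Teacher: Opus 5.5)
Your proof is correct, but your reduction to $t_0=0$ takes a different route from the paper's. The paper substitutes $x=1-t_0$, $y=rx$ to factor the objective as $A(r)\,x^{2k}(2-x(1+r))$. It then maximizes over the scale $x$ for each fixed ratio $r$: either at the interior peak $x^*(r)=4k/((2k+1)(1+r))$ or clamped at $x=1$. Finally it uses the algebraic inequality $k(r^k+1)(r-1)\le(r+1)(r^k-1)$ to show that the value along the interior branch is nonincreasing in $r$, which forces $x=1$. In that argument $k=1$ shows up only as the degenerate case where $G(r)$ is constant.

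You instead fix the mean $\mu$ and note that widening the interval is a mean-preserving spread. By strict convexity of $(1-t)^k$, this strictly increases $\rho_k$ while leaving $\E[t]$ unchanged. That reduces the problem to the two one-parameter families $[0,2\mu]$ and $[2\mu-1,1]$, and one derivative eliminates the second. Your argument explains \emph{why} the optimum sits at $t_0=0$: the objective rewards spread through a convex functional, while the linear factor only sees the mean. It also makes $k=1$ immediate from linearity of $\rho_1$. Moreover, the fixed-mean reduction to the widest admissible interval carries over verbatim to any objective $\E[t]\cdot\Psi(\E[\phi(t)])$ with $\phi$ convex and $\Psi$ increasing. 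The paper's computation, in exchange, gives the explicit optimal scale for every shape ratio $r$.

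The final one-dimensional optimization along $t_0=0$ is the same in both proofs. You additionally show that the polynomial has exactly one root in $(0,1)$ and that this root is the global maximizer, which the paper leaves implicit.
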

\begin{proof}
We analyze the sample complexity objective $J$ for a uniform masking distribution $t \sim U[t_0, t_1]$. The objective is given by:
\begin{equation}
\label{eq:comp_obj}
    J(t_0, t_1) = \mathbb{E}[t] \cdot \qty(\mathbb{E}[(1-t)^k])^2.
\end{equation}
For the uniform distribution, the moments are $\mathbb{E}[t] = \frac{t_0+t_1}{2}$ and $\mathbb{E}[(1-t)^k] = \frac{(1-t_0)^{k+1} - (1-t_1)^{k+1}}{(t_1-t_0)(k+1)}$. Dropping constant factors which do not affect the location of the optimum, we maximize
\begin{equation*}
    \hat{J}(t_0, t_1) = (t_0+t_1) \left[ \frac{(1-t_0)^{k+1} - (1-t_1)^{k+1}}{t_1-t_0} \right]^2.
\end{equation*}

Let $x = 1 - t_0$ and $y = 1 - t_1$. The constraints $0 \le t_0 \le t_1 \le 1$ map to $1 \ge x \ge y \ge 0$.
The transformed objective function $G(x, y)$ is
\begin{equation*}
    G(x, y) = \qty(2 - (x+y)) \left( \frac{x^{k+1} - y^{k+1}}{x - y} \right)^2.
\end{equation*}

Let $y = rx$, where $r \in [0, 1]$. We substitute this into the term inside the square
\begin{equation*}
    \frac{x^{k+1} - (rx)^{k+1}}{x - rx} = x^k \frac{1 - r^{k+1}}{1-r}.
\end{equation*}
Substituting this back into $G(x, y)$, we isolate the dependency on $x$ and $r$
\begin{equation*}
    G(x, r) = \qty(\frac{1 - r^{k+1}}{1-r})^2 \cdot x^{2k} \qty(2 - x(1+r)).
\end{equation*}

We denote $A(r)=\qty(\frac{1 - r^{k+1}}{1-r})^2$ and $\Phi\qty(x)=x^{2k} \qty(2 - x(1+r))$. We first maximize $\Phi(x)$ for a fixed ratio $r$. The derivative is
\begin{align*}
    \Phi'(x) &= 2k x^{2k-1} (2 - x(1+r)) + x^{2k} (-(1+r)) \\
             &= x^{2k-1} \left[ 4k - 2kx(1+r) - x(1+r) \right] \\
             &= x^{2k-1} \left[ 4k - (1+r)(2k+1)x \right].
\end{align*}
Setting $\Phi'(x) = 0$ yields the unique stationary point:
\begin{equation*}
    x^*(r) = \frac{4k}{(2k+1)(1+r)}.
\end{equation*}
The function $\Phi(x)$ strictly increases for $x < x^*(r)$ and decreases for $x > x^*(r)$. This defines two regimes:
\begin{itemize}
    \item $x^*(r) \le 1$: This occurs when $r \ge \frac{2k-1}{2k+1}$. Here, the peak lies within the valid range, so the optimal scale is $x = x^*(r)$.
    \item $x^*(r) > 1$: This occurs when $r < \frac{2k-1}{2k+1}$. Here, the unconstrained peak lies to the right of the feasible region ($x > 1$). Since $\Phi(x)$ is strictly increasing on the feasible interval $[0, 1]$, the maximum is constrained to the boundary. Thus, the optimal scale is clamped at $x=1$.
\end{itemize}

Next, we only consider the case where $x^*(r) \le 1$ and $r \ge \frac{2k-1}{2k+1}$. We substitute the optimal $x^*(r)$ back into the objective function to analyze the dependency on the distribution shape $r$.
Since $2 - x^*(r)(1+r) = 2 - \frac{4k}{2k+1} = \frac{2}{2k+1}$, the value function $G(r)$ scales as:
\begin{equation*}
    G(r) \propto A(r) \cdot (x^*(r))^{2k} \propto \left( \frac{1 - r^{k+1}}{1-r} \right)^2 \left( \frac{1}{1+r} \right)^{2k}.
\end{equation*}

When $k=1$, $G\qty(r)$ is constant w.r.t $r$. Directly using \cref{eq:comp_obj}, we can get the only constraint is $\E\qty[t]=\frac{1}{3}$.

When $k>1$, we have
\begin{equation*}
    G'\qty(r)=\frac{\mathrm{d}\qty(\left( \frac{1 - r^{k+1}}{1-r} \right)^2 \left( \frac{1}{1+r} \right)^{2k})}{\mathrm{d}r}=-\frac{2\qty(\frac{1}{r+1})^{2k+1}\qty(r^{k+1}-1)\qty(k\qty(r-1)\qty(r^k+1)-\qty(r+1)\qty(r^k-1))}{\qty(1-r)^3}.
\end{equation*}

Since $1+r^k\ge r^i+r^{k-i}$ for any $1\le i\le k-1$ and $0\le r\le 1$, we have
\begin{align*}
    \qty(k-1)+\qty(k-1)r^k\ge{}&2\qty(r+r^2+\cdots+r^{k-1}),\\
    k+kr^k\ge{}&1+2\qty(r+r^2+\cdots+r^{k-1})+r^k,\\
    k\qty(r^k+1)\ge{}&\qty(r+1)\qty(1+r+r^2+\cdots+r^{k-1}),\\
    k\qty(r^k+1)\qty(r-1)\le{}&\qty(r+1)\qty(r^k-1).
\end{align*}

Therefore, $G'(r)\le 0$ and $G\qty(r)$ is monotonically non-increasing w.r.t $\frac{2k-1}{2k+1}\le r\le 1$. Thus, the optimal would be at the point where $r^*=\frac{2k-1}{2k+1}$ and $x^*=1$.

Since $x^*=1$ at any case when $k>1$, we maximize the objective with respect to $y$ along the boundary
\begin{equation*}
    G(1, y) = (1-y) \left( \frac{1-y^{k+1}}{1-y} \right)^2 = \frac{(1-y^{k+1})^2}{1-y}.
\end{equation*}
Differentiating $\ln G(1, y)$ with respect to $y$ and setting to zero
\begin{equation*}
    \frac{-2(k+1)y^k}{1-y^{k+1}} + \frac{1}{1-y} = 0.
\end{equation*}
Rearranging terms leads to the defining polynomial for the optimal upper bound
\begin{equation*}
    (2k+1)y^{k+1} - (2k+2)y^k + 1 = 0.
\end{equation*}
\end{proof}

\section{Proof of \cref{thm:eff_loss} and \cref{rem:ce_loss}}

\subsection{Proof of \cref{thm:eff_loss}}
\label{app:proof of eff loss}

We first restate the theorem.

\begin{theorem}[Effective Loss Decomposition]
    Consider the training objective \cref{eq:original loss} over a finite dataset $\mathcal{D}'$. For $\m\in\mathcal{R}_S$, we define $f^*(\tilde\z)=\frac{1}{\abs{M_{\m}}}x'_{M_{\m}\cap \+S'}$ to be the optimal target function. The loss is equivalent to:
    \begin{equation*}
        \mathcal{L}_{\mathrm{eff}}(\theta) =P_S \E_{\tilde{\z} \mid \mathcal{R}_S} \qty[ \frac{\abs{M_{\m}}}{2t}\qty(f_\theta(\tilde{\z}) - f^*(\tilde{\z}))^2 ]+P_N \E_{\tilde{\z} \mid \mathcal{R}_N} \qty[ \frac{\abs{M_{\m}}}{2t}f_\theta(\tilde{\z})^2 ]
    \end{equation*}
    where $P_S = (k+1)\E_{t\sim U\qty[t_0,t_1]}\qty[t(1-t)^k]$ and $P_N = 1 - P_S$. The expectations are taken over the empirical distribution of $\tilde{\z}$ conditioned on the respective regimes.
\end{theorem}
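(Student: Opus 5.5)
The plan is to exploit the fact that, under uniform attention, the reduced model outputs the same scalar $f_\theta(\tilde\z)$ at every position. The per-sample loss therefore depends on $\theta$ only through a quadratic in this one number. First I rewrite
\begin{equation*}
\ell(\theta\mid\x',t,\m)=\frac{1}{2t}\sum_{j\in M_{\m}}\qty(f_\theta(\tilde\z)-x'_j)^2 .
\end{equation*}
Completing the square in $f_\theta(\tilde\z)$ gives
\begin{equation*}
\ell=\frac{|M_{\m}|}{2t}\qty(f_\theta(\tilde\z)-\bar x_{\m})^2+\frac{1}{2t}\sum_{j\in M_{\m}}\qty(x'_j-\bar x_{\m})^2,
\qquad \bar x_{\m}=\frac{1}{|M_{\m}|}\sum_{j\in M_{\m}}x'_j .
\end{equation*}
The second term does not depend on $\theta$, so I drop it; ``equivalent'' will mean equal up to a $\theta$-independent constant, hence with identical gradient flow. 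The loss then becomes a weighted squared regression of $f_\theta(\tilde\z)$ onto the conditional target $\E[\bar x_{\m}\mid\tilde\z]$, again up to a constant. Here the conditioning is over the joint empirical-plus-masking distribution of $(\x',t,\m)$ given the visible configuration $\tilde\z$. Note that $\tilde\z$ determines $\m$, and hence $|M_{\m}|$. I will treat $t$ as also determined, or fold the weight $|M_{\m}|/2t$ into the conditional expectation.

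Next I compute this conditional target regime by regime. For a noise coordinate $j\notin\+S'$, the masked bit $x'_j$ is independent of everything visible, so its conditional mean is $0$. For $j\in\+S'$, the parity relation $\prod_{i\in\+S'}x'_i=1$ pins down $x'_j$ from the visible bits exactly when every other element of $\+S'$ is unmasked, that is, when $|M_{\m}\cap\+S'|=1$. Otherwise the product of the remaining visible secret bits is independent of $x'_j$, and its conditional mean is again $0$. The consequences for the two regimes are as follows.
\begin{itemize}
\item In $\mathcal{R}_S$, the conditional mean of $\bar x_{\m}$ is $\frac{1}{|M_{\m}|}x'_{M_{\m}\cap\+S'}=f^*(\tilde\z)$.
\item In $\mathcal{R}_N$, every masked coordinate has conditional mean zero, so the target is $0$.
\end{itemize}
Substituting these targets yields the two squared terms in the statement. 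The cross terms $\qty(f_\theta-\text{target})\qty(\text{target}-\bar x_{\m})$ vanish in expectation by the tower property, and the residual variance is absorbed into the constant.

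Finally, I split the expectation with the law of total probability over $\m\in\mathcal{R}_S$ versus $\mathcal{R}_N$ and compute $P_S$. Given $t$, the event $|M_{\m}\cap\+S'|=1$ has probability $(k+1)t(1-t)^k$, since $|\+S'|=k+1$ and the masks are independent Bernoulli$(t)$. Averaging over $t\sim U[t_0,t_1]$ gives $P_S$, and $P_N=1-P_S$.

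The main obstacle is the finite-dataset caveat. The independence claims (noise bits independent of visible bits, and the partial parity being uninformative) hold exactly only under the population distribution, not under the empirical distribution of $\mathcal{D}'$. I would handle this by stating that the conditional targets are computed with respect to the data-generating distribution. Equivalently, the empirical conditional means equal these targets up to sampling fluctuations, which vanish when each $\tilde\z$ has many compatible completions. The identity is exact at the population level, which is the sense in which I interpret ``equivalent'' here.
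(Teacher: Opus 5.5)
Your proposal is correct and follows essentially the paper's route. You complete the square in $f_\theta(\tilde\z)$, show regime by regime that the effective target is $f^*$ on $\mathcal{R}_S$ and $0$ on $\mathcal{R}_N$, and compute $P_S=(k+1)\E_{t}[t(1-t)^k]$; your tower-property formulation is exactly the paper's evaluation of the cross term $\E_{\x'\mid t,\m}[f_\theta(\tilde\z)\sum_{j\in M_{\m}}x'_j]$.

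Your finite-dataset caveat is well founded. The paper applies the same independence claims to the empirical distribution of $\mathcal{D}'$ without comment, but they hold only at the population level. The one tidy-up needed is to condition on the pair $(t,\tilde\z)$ rather than treating $t$ as determined by $\tilde\z$. This is harmless, because $\x'$ is independent of $(t,\m)$, so the conditional mean of the masked bits does not depend on $t$.
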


\begin{proof}
    We recall that the overall training objective is given by
    \begin{align*}
        \mathcal{L}(\theta) ={}& \E_{\x'\in\mathcal{D'}, t\sim U\qty[t_0,t_1], \m} \qty[ \frac{1}{2t}\sum_{j \in M_{\m}} \qty( \tf_\theta(\widetilde\X)_{[:,j]} - x'_j )^2 ]\\
        ={}&\E_{\tilde\z} \qty[ \frac{1}{2t}\sum_{j \in M_{\m}} \qty( f_\theta(\tilde\z) - x'_j )^2 ]\\
        ={}&\E_{\tilde\z} \qty[ \frac{1}{2t}\qty(\abs{M_{\m}}f_\theta\qty(\tilde\z)^2-2\qty(\sum_{j\in M_{\m}}x'_j)f_\theta\qty(\tilde\z)+\frac{\qty(\sum_{j\in M_{\m}}x'_j)^2}{\abs{M_{\m}}})]+\E_{\tilde\z}\qty[\frac{1}{2t}\qty(\sum_{j \in M_{\m}}{x'_j}^2-\frac{\qty(\sum_{j\in M_{\m}}x'_j)^2}{\abs{M_{\m}}})].
    \end{align*}

    We denote $C_0=\E_{\tilde\z}\qty[\frac{1}{2t}\qty(\sum_{j \in M_{\m}}{x'_j}^2-\frac{\qty(\sum_{j\in M_{\m}}x'_j)^2}{\abs{M_{\m}}})]$, then the training objective reduces to
    \begin{align*}
        \+L\qty(\theta)={}&\E_{\x',t,\m} \qty[ \frac{\abs{M_{\m}}}{2t}\qty(f_\theta\qty(\tilde\z)-\frac{\sum_{j\in M_{\m}}x'_j}{\abs{M_{\m}}})^2]+C_0\\
        ={}&\E_{t,\m} \qty[ \frac{\abs{M_{\m}}}{2t} \E_{\x'\mid t,\m}\qty[\qty(f_\theta\qty(\tilde\z)-\frac{\sum_{j\in M_{\m}}x'_j}{\abs{M_{\m}}})^2]]+C_0.
    \end{align*}

    If $\m\in\mathcal{R}_N$, we consider the following two cases. If $\abs{M_{\m}\cap S'}=0$, then the masked positions are irrelevant to the parity task. Therefore $f_\theta\qty(\tilde\z)$ is independent to $\sum_{j\in M_{\m}}x'_j$. In the second case, if $\abs{M_{\m}\cap S'}\ge 2$, then for $j\in M_{\m}\cap S'$, $x'_j$ is orthogonal to any function of $\tilde\z$. In both cases, we have
    \begin{equation*}
        \E_{\x'\mid t,\m}\qty[f_\theta\qty(\tilde\z)\sum_{j\in M_{\m}}x'_j]=0.
    \end{equation*}

    If $\m\in\mathcal{R}_S$, then we denote $M_{\m}\cap \+S'=\qty{k}$. For $j\in M_{\m}\backslash\qty{k}$, $f_\theta\qty(\tilde\z)$ is independent to $x_j'$. Therefore, we have
    \begin{equation*}
        \E_{\x'\mid t,\m}\qty[f_\theta\qty(\tilde\z)\sum_{j\in M_{\m}}x'_j]=\E_{\x'\mid t,\m}\qty[f_\theta\qty(\tilde\z)x'_k].
    \end{equation*}

    Therefore, absorb all constant to $C_1$, we can then reduce the loss to
    \begin{align*}
        \+L\qty(\theta)=\Pr\qty(\m\in\mathcal{R}_S)\cdot\E_{\tilde\z\mid\mathcal{R}_S}\qty[ \frac{\abs{M_{\m}}}{2t}\qty(f_\theta\qty(\tilde\z)-\frac{x'_k}{\abs{M_{\m}}})^2]+\Pr\qty(\m\in\mathcal{R}_N)\cdot\E_{\tilde\z\mid\mathcal{R}_N}\qty[ \frac{\abs{M_{\m}}}{2t}f_\theta\qty(\tilde\z)^2]+C_1.
    \end{align*}
    We thus finish the proof by calculating
    \begin{equation*}
        \Pr\qty(\m\in\+R_S)=(k+1)\E_{t\sim U\qty[t_0,t_1]}\qty[t(1-t)^k].
    \end{equation*}
\end{proof}

\subsection{Proof of \cref{rem:ce_loss}}
\label{app:proof of bce loss}

For convenience, we first restate the result and then give its full derivation. With a slight abuse of notation, we overload $f_\theta$ as $f_\theta\qty(\tilde z)=\mathrm{sigmoid}\qty(\bv^\top \sigma(\W \tilde{\z}))$.

\begin{theorem}[Extension to the Cross-Entropy Loss; restatement of \cref{rem:ce_loss}]
    Let $f_\theta(\tilde\z)\in(0,1)$ denote the predicted probability and define the
    regime-dependent targets
    \begin{equation*}
        f^*_S(\tilde\z) = \frac{x'_k + (\abs{M_{\m}}-1)/2}{\abs{M_{\m}}}
        \;\;\text{for } \m\in\mathcal{R}_S,\ M_{\m}\cap\+S'=\{k\},
        \qquad
        f^*_N(\tilde\z) = \tfrac{1}{2}
        \;\;\text{for } \m\in\mathcal{R}_N .
    \end{equation*}
    Then the binary cross-entropy objective is equivalent to
    \begin{equation*}
        \mathcal{L}_{\mathrm{eff}}(\theta)
        = \sum_{R\in\{\mathcal{R}_S,\mathcal{R}_N\}}
        \Pr(\m\in R)\,
        \E_{\tilde\z\mid R}\qty[\, \frac{\abs{M_{\m}}}{2t}\,
        D_{\mathrm{KL}}\qty(f^*_R(\tilde\z)\,\|\,f_\theta(\tilde\z)) \,].
    \end{equation*}
\end{theorem}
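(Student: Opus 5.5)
The argument follows the proof of \cref{thm:eff_loss}, with the squared loss replaced by binary cross-entropy on the $\{0,1\}$ encoding $y_j=(1+x'_j)/2$ of each masked token. Since uniform attention makes the model output the same at every position, the per-sample loss is
\begin{equation*}
\frac{1}{2t}\sum_{j\in M_{\m}}\mathrm{CE}\qty(y_j,f_\theta(\tilde\z)),\qquad \mathrm{CE}(y,p)=-y\log p-(1-y)\log(1-p).
\end{equation*}
Write $\bar y=\frac{1}{\abs{M_{\m}}}\sum_{j\in M_{\m}}y_j$. Because CE is affine in its first argument, the sum collapses to $\abs{M_{\m}}\,\mathrm{CE}(\bar y,p)$. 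Using $\mathrm{CE}(q,p)=H(q)+D_{\mathrm{KL}}(q\|p)$ gives a KL term plus a $\theta$-independent entropy term.

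Next I condition on $(t,\m)$ and take the expectation over $\x'$ given the visible tokens, i.e.\ given $\tilde\z$. The $\theta$-dependent part of $\mathrm{CE}(\bar y,p)$ is affine in $\bar y$, so it equals $\mathrm{CE}(\E[\bar y\mid\tilde\z],p)$. After another split via $\mathrm{CE}=H+D_{\mathrm{KL}}$, the loss becomes $\frac{\abs{M_{\m}}}{2t}D_{\mathrm{KL}}(\E[\bar y\mid\tilde\z]\,\|\,f_\theta(\tilde\z))$ plus constants. Everything therefore reduces to computing the conditional mean $\E[\bar y\mid\tilde\z]$ in each regime, reusing the independence facts from the proof of \cref{thm:eff_loss}.

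\textbf{Noise regime.} If $M_{\m}\cap\+S'=\emptyset$, each masked $x'_j$ is a uniform bit independent of the visible tokens. If $\abs{M_{\m}\cap\+S'}\ge2$, each masked secret bit is still marginally uniform given the visible ones, since the parity constraint involves at least two hidden secret bits. Either way $\E[y_j\mid\tilde\z]=\tfrac12$ for every masked $j$, so $\E[\bar y\mid\tilde\z]=\tfrac12=f^*_N$.

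\textbf{Signal regime.} Let $M_{\m}\cap\+S'=\{k\}$. Then $x'_k$ is determined by the visible tokens and the other $\abs{M_{\m}}-1$ masked bits have mean $\tfrac12$, so
\begin{equation*}
\E[\bar y\mid\tilde\z]=\frac{y_k+(\abs{M_{\m}}-1)/2}{\abs{M_{\m}}}.
\end{equation*}
This matches $f^*_S$ once the statement's $x'_k$ is read as the $\{0,1\}$-valued label $y_k$; I would state this identification explicitly.

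Finally, I average over $(t,\m)$ and split by regime, which produces the weights $\Pr(\m\in R)$. All entropy terms and the $H(\bar y)$ corrections go into an additive constant, which gives the stated equivalence up to that constant.

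The main obstacle is justifying the conditional-mean step rigorously on the empirical distribution. The claims $\E[y_j\mid\tilde\z]=\tfrac12$ hold exactly only in population. On a finite $\mathcal{D}'$, identical $\tilde\z$ may arise from few samples, so I would either argue in the population limit, as \cref{thm:eff_loss} implicitly does, or define $f^*_R$ as the empirical conditional mean and note that it concentrates to the stated values.
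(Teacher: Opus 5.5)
Your proposal is correct and follows essentially the same route as the paper. You use linearity of BCE in its first argument to merge the masked-token terms, and you condition on $\tilde\z$ so the target becomes its conditional mean: $\tfrac12$ in the noise regime and $(y_k+(\abs{M_{\m}}-1)/2)/\abs{M_{\m}}$ in the signal regime. You then split $\mathrm{BCE}=H+D_{\mathrm{KL}}$ and absorb the entropy into a constant, exactly as the paper does. Your explicit $\{0,1\}$ reading of $x'_k$ and your population-versus-empirical caveat are both implicit in the paper's step $\E[x'_j\mid\tilde\z]=\tfrac12$, which already presupposes them, so stating them only makes the argument cleaner.
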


\begin{proof}
By the law of total expectation, we rewrite the objective by conditioning on the
corrupted state $\tilde\z$:
\begin{equation}
    \mathcal{L}(\theta)
    = \E_{t,\m}\qty[ \frac{1}{2t}\,
    \E_{\z\mid t,\m}\qty[ \sum_{j\in M_{\m}}
    \E_{x'\mid\tilde\z}\qty[ \mathrm{BCE}(x'_j, f_\theta(\tilde\z)) ] ] ].
\end{equation}
Using the identity $\mathrm{BCE}(p,q) = D_{\mathrm{KL}}(p\,\|\,q) + H(p)$, we evaluate the
inner sum in each regime; note that $\mathrm{BCE}(p,q)$ is linear in its first argument $p$.

\paragraph{Case 1: $\m\in\mathcal{R}_N$.}
We have $\E[x'_j\mid\tilde\z]=\tfrac12$ for every $j\in M_{\m}$. The inner sum becomes
\begin{equation}
    \abs{M_{\m}}\,\mathrm{BCE}(\tfrac12, f_\theta)
    = \abs{M_{\m}}\, D_{\mathrm{KL}}(\tfrac12\,\|\,f_\theta) + C_0,
\end{equation}
where $C_0 = \abs{M_{\m}}\,H(\tfrac12)$ is independent of $\theta$.

\paragraph{Case 2: $\m\in\mathcal{R}_S$.}
Here $M_{\m}\cap\+S'=\{k\}$, while the remaining $\abs{M_{\m}}-1$ masked tokens are noise
with $\E[x'_j\mid\tilde\z]=\tfrac12$. By linearity of $\mathrm{BCE}$ in its first argument,
averaging the targets gives
\begin{equation}
    \mathrm{BCE}(x'_k, f_\theta) + (\abs{M_{\m}}-1)\,\mathrm{BCE}(\tfrac12, f_\theta)
    = \abs{M_{\m}}\,\mathrm{BCE}(f^*_S(\tilde\z), f_\theta)
    = \abs{M_{\m}}\, D_{\mathrm{KL}}(f^*_S(\tilde\z)\,\|\,f_\theta) + C_1,
\end{equation}
with $C_1 = \abs{M_{\m}}\,H(f^*_S(\tilde\z))$ independent of $\theta$.

Substituting both cases back into the outer expectations and splitting by the regime
probabilities $\Pr(\m\in\mathcal{R}_S)$ and $\Pr(\m\in\mathcal{R}_N)$ yields the stated
identity, completing the proof.
\end{proof}

\section{Proof of \cref{thm:energy landscape} and \cref{cor:signal_collapse}}
\subsection{Proof of \cref{thm:energy landscape}}
\label{app: energy landscape}

\begin{theorem}[Masked Diffusion Energy Landscape]
\label{thm:energy_landscape_restated}
Consider the effective loss $\mathcal{L}_{\mathrm{eff}}(\bv, \W)$ defined in \cref{eq:eff loss}. Let $\h\qty(\tilde\z)=\sigma(\W \tilde{\z})$. Let the correlation vector $\bc(\W)$ and the covariance matrix $\bSigma(\W)$ be defined as
\begin{align*}
    \bc(\W) ={}& P_S \E_S \qty[ \frac{|M_{\mathbf{m}}|}{2t} f^* \h ] \\
    \bSigma(\W) ={}& P_S \E_S \qty[ \frac{|M_{\mathbf{m}}|}{2t} \h \h^\top ] + P_N \E_N \qty[ \frac{|M_{\mathbf{m}}|}{2t} \h \h^\top ].
\end{align*}
We assume the readout $\bv$ is at optimality for a fixed $\W$. Then minimizing the effective loss is equivalent to maximizing the energy $E(\W)$
\begin{equation*}
    E(\W) = \bc(\W)^\top \bSigma(\W)^{\dagger} \bc(\W).
\end{equation*}
\end{theorem}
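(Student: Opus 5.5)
The plan is to read the effective loss of \cref{thm:eff_loss} as a weighted least-squares problem in the readout $\bv$ for fixed $\W$. Then we solve for the optimal readout in closed form and substitute it back. Write $w(\tilde\z)=\frac{|M_{\m}|}{2t}$ and $f_\theta(\tilde\z)=\bv^\top\h(\tilde\z)$. Expanding the squares in \cref{eq:eff loss} gives the quadratic
\begin{equation*}
\mathcal{L}_{\mathrm{eff}}(\bv,\W)=\bv^\top\bSigma(\W)\bv-2\bv^\top\bc(\W)+C,
\end{equation*}
where $C=P_S\E_S[w\,(f^*)^2]$ does not depend on $\theta$. The quadratic term collects $P_S\E_S[w\h\h^\top]+P_N\E_N[w\h\h^\top]$, which is exactly $\bSigma(\W)$ as defined in the restated theorem. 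The linear term appears only in the Signal regime, because the Noise term has no target; it equals $-2P_S\E_S[w f^*\h]^\top\bv=-2\bc^\top\bv$.

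Since $\bSigma(\W)$ is a weighted second-moment matrix with nonnegative weights $w$, it is positive semidefinite. So the quadratic is convex in $\bv$, and its minimizers are exactly the solutions of the normal equation $\bSigma\bv=\bc$. The one point that needs care is consistency of this equation when $\bSigma$ is singular, and the plan is to show $\bc\in\mathrm{range}(\bSigma)$. Take any $\bu\in\ker\bSigma$. Then $0=\bu^\top\bSigma\bu\ge P_S\E_S[w(\bu^\top\h)^2]$, so $\bu^\top\h=0$ almost surely on the Signal-regime support of the empirical distribution; here $w>0$ because $M_{\m}\neq\emptyset$ there. It follows that $\bu^\top\bc=P_S\E_S[wf^*\bu^\top\h]=0$, hence $\bc\perp\ker\bSigma$, which means $\bc\in\mathrm{range}(\bSigma)$. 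Therefore $\bv^*(\W)=\bSigma^\dagger\bc$ is a minimizer, and every other minimizer differs from it by an element of $\ker\bSigma$ without changing the loss value.

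Substituting $\bv^*$ back and using $\bSigma^\dagger\bSigma\bSigma^\dagger=\bSigma^\dagger$ gives
\begin{equation*}
\min_{\bv}\mathcal{L}_{\mathrm{eff}}(\bv,\W)=\bc^\top\bSigma^\dagger\bc-2\bc^\top\bSigma^\dagger\bc+C=C-E(\W).
\end{equation*}
Under the lazy-readout assumption the training objective for $\W$ is this profiled loss. Minimizing it is therefore equivalent to maximizing $E(\W)=\bc^\top\bSigma^\dagger\bc$.

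The range argument is the only nontrivial step; everything else is routine expansion. I would also note that \cref{thm:eff_loss} holds only up to an additive constant in $\theta$, and that constant is absorbed into $C$.
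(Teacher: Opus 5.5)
Your proposal is correct and follows the paper's proof. Both expand $\mathcal{L}_{\mathrm{eff}}$ as the quadratic $\bv^\top\bSigma\bv-2\bv^\top\bc+C$, take the minimum-norm readout $\bv^*=\bSigma^\dagger\bc$, and substitute back to get $C-E(\W)$. Your additional check that $\bc\perp\ker\bSigma$ (so that $\bSigma^\dagger\bc$ really is a minimizer when $\bSigma$ is singular) is a step the paper takes for granted when it asserts that the minimizers form an affine space; it tightens the argument rather than changing the route.
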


\begin{proof}
Recall the effective loss in \cref{eq:eff loss}
\begin{align*}
    \mathcal{L}_{\mathrm{eff}}(\theta) ={}&P_S \E_{\tilde{\z} \mid \mathcal{R}_S} \qty[ \frac{\abs{M_{\m}}}{2t}\qty(f_\theta(\tilde{\z}) - f^*(\tilde{\z}))^2 ]+P_N \E_{\tilde{\z} \mid \mathcal{R}_N} \qty[ \frac{\abs{M_{\m}}}{2t}f_\theta(\tilde{\z})^2 ],\\
    ={}&\bv^\top\qty(P_S\E_S\qty[\frac{\abs{M_{\m}}}{2t}\h\h^\top]+P_N\E_N\qty[\frac{\abs{M_{\m}}}{2t}\h\h^\top])\bv-2\bv^\top\qty(P_S\E_S\qty[\frac{\abs{M_{\m}}}{2t}f^*\h])+P_S\E_S\qty[\frac{\abs{M_{\m}}}{2t}{f^*}^2]
\end{align*}
We denote
\begin{align*}
    \bSigma\qty(\W)={}&P_S\E_S\qty[\frac{\abs{M_{\m}}}{2t}\h\h^\top]+P_N\E_N\qty[\frac{\abs{M_{\m}}}{2t}\h\h^\top],\\
    \bc\qty(\W)={}&P_S\E_S\qty[\frac{\abs{M_{\m}}}{2t}f^*\h].
\end{align*}
% For any fixed $\W$, the optimal readout $\bv^*$ that minimizes the loss $\+L_\mathrm{eff}$ is
% \begin{equation*}
%     \bv^* = \bSigma(\W)^{-1} \bc(\W).
% \end{equation*}
For any fixed $\W$, the optimal readout $\bv^*$ is not unique if $\bSigma(\W)$ is rank-deficient. However, the set of minimizers forms an affine space, and every point in this set yields the identical effective loss $\mathcal{L}_{\mathrm{eff}}$. For simplicity, we choose 
\begin{equation*}
\bv^* = \bSigma(\W)^{\dagger} \mathbf{c}(\W),
\end{equation*}
which is the unique minimum-norm solution.
Substituting $\bv^*$ back into $\mathcal{L}_{\mathrm{eff}}$, we have
\begin{equation*}
    \mathcal{L}_{\mathrm{eff}}(\bv^*, \W) = \text{Const} -  \bc^\top \bSigma^{\dagger} \bc = \text{Const} - E(\W).
\end{equation*}
This proves that minimizing the effective loss is equivalent to maximizing $E(\W)$.
\end{proof}

% \begin{corollary}[Collapse of Feature Learning in Pure Signal Regime]
% \label{cor:signal_collapse}
% Consider the case where the training falls entirely into the Signal Regime (i.e., $P_N \to 0$). If the network is over-parameterized such that the target $f^*$ lies within the column space of the features $F$, the energy function becomes constant:
% \begin{equation}
%     E(\W) \xrightarrow{P_N \to 0} \frac{1}{2} \E_S [ (f^*)^2 ] = \text{Constant}.
% \end{equation}
% \end{corollary}

% \begin{proof}
% When $P_N = 0$, the total covariance $\bSigma = \bSigma_S$. The energy becomes:
% \begin{equation}
%     E(\W) = \frac{1}{2} (P_S \E_S[f^* \h])^\top (P_S \E_S[\h\h^\top])^{-1} (P_S \E_S[f^* \h]).
% \end{equation}

% \end{proof}

\subsection{Proof of \cref{cor:signal_collapse}}
\label{app: signal collapse}
\begin{corollary}[Collapse of Feature Learning in Pure Signal Regime]
Consider the case where the training falls entirely into the Signal Regime ($P_N \to 0$). If the network is over-parameterized such that the target vector $\y\in\R^T$ lies within the subspace spanned by the feature activations $\bH\in\R^{D\times T}$, the energy function degenerates to a data-dependent constant:
\begin{equation*}
    E(\W) \xrightarrow{P_N \to 0} \frac{1}{T} \| \y \|^2 = \text{Constant}.
\end{equation*}
In this regime, $\nabla_{\W} E(\W) = \mathbf{0}$, implying no gradient pressure to learn structured representations.
\end{corollary}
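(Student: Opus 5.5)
Since the empirical distribution is finite, I will write $E(\W)$ entirely in matrix form over the signal samples. Let $T$ be the number of distinct corrupted samples in the Signal Regime. Let $\bH\in\R^{D\times T}$ collect the feature vectors $\h(\tilde\z_\tau)$, and let $\y\in\R^T$ collect the targets. I will absorb the positive weights $\sqrt{\abs{M_{\m}}/(2t)}$ and the empirical probabilities into the columns of $\bH$ and the entries of $\y$ consistently. Then, from \cref{thm:energy landscape}, $\bc(\W)=\frac{P_S}{T}\bH\y$ and
\[
\bSigma(\W)=\frac{P_S}{T}\bH\bH^\top+P_N\bSigma_N(\W).
\]
Setting $P_N\to0$, the energy becomes
\[
E(\W)\to \frac{P_S}{T}\,\y^\top\bH^\top(\bH\bH^\top)^{\dagger}\bH\y .
\]
The constant $P_S$ normalizes to $1$ in the pure-signal limit, or is absorbed into the constant.

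The key identity is that $\bH^\top(\bH\bH^\top)^\dagger\bH=\bH^\dagger\bH$. This is the orthogonal projector onto the row space of $\bH$, i.e.\ onto $\mathrm{range}(\bH^\top)\subseteq\R^T$. I will verify it via the SVD $\bH=\mathbf{U}\mathbf{S}\mathbf{V}^\top$. Both sides equal $\mathbf{V}_r\mathbf{V}_r^\top$, where $\mathbf{V}_r$ holds the right singular vectors with nonzero singular values. Under the over-parameterization hypothesis, the target vector lies in the span of the feature activations across samples, i.e.\ $\y\in\mathrm{range}(\bH^\top)$. Hence the projector fixes $\y$, and
\[
E(\W)\to\frac{1}{T}\y^\top\y=\frac1T\norm{\y}^2,
\]
which does not depend on $\W$.

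To conclude $\nabla_\W E=\mathbf{0}$, I will argue that the hypothesis holds on an open set of $\W$. Expressivity typically means $\bH$ has full column rank $T$, which is an open condition since rank is lower semicontinuous. On that open set, $E$ is constant, so its gradient vanishes there. If ReLU kinks cause trouble, I will state the result almost everywhere, or via the subgradient.

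The main obstacle is making the limit $P_N\to0$ commute with the pseudoinverse. $(\cdot)^\dagger$ is discontinuous when rank changes, so $\bSigma^\dagger\to(\bH\bH^\top/T)^\dagger$ requires care if $\bSigma_N$ adds directions outside $\mathrm{range}(\bH)$. I plan to handle this by noting that $\bc\in\mathrm{range}(\bH)$. Then $\bc^\top\bSigma^\dagger\bc$ equals the quadratic form restricted to that subspace. By a Schur-complement or perturbation argument it converges to $\bc^\top(\bH\bH^\top/T)^\dagger\bc$, because the $P_N$ perturbation on that block vanishes. Alternatively, I would simply interpret the statement at $P_N=0$, where no limit is needed.
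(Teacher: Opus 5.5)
Your proposal is correct and follows the paper's proof. Both write $\bc$ and $\bSigma$ in matrix form over the weighted signal samples, identify $\bH^\top(\bH\bH^\top)^{\dagger}\bH$ as the orthogonal projector onto the row space of $\bH$, and use $\y\in\mathrm{range}(\bH^\top)$ to get $E(\W)=\frac{1}{T}\norm{\y}^2$. The paper simply works at $P_N=0$ and reads $\nabla_{\W}E=\mathbf{0}$ off the $\W$-independence, so your two extra steps are sound refinements the paper does not make: the Schur-complement continuity of $\bc^\top\bSigma^{\dagger}\bc$ as $P_N\to0$ (using $\bc\in\mathrm{range}(\bH)$), and the open-set full-column-rank argument for the vanishing gradient, which mildly strengthens the stated hypothesis.
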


\begin{proof}
When $P_N = 0$, the expectations are defined over the Signal Regime. Let $T$ be the number of samples. We concatenate the feature vectors $\h(\tilde{\z}_i)$ into a matrix $\bH \in \mathbb{R}^{D \times T}$ and the target values $f^*(\tilde{\z}_i)$ into a vector $\y \in \mathbb{R}^T$:
\begin{equation*}
    \bH = \begin{bmatrix}  \sqrt{\frac{\abs{M_{\m_1}}}{2t_1}}\h(\tilde{\z}_1) & \cdots & \sqrt{\frac{\abs{M_{\m_T}}}{2t_T}}\h(\tilde{\z}_T)  \end{bmatrix}, \quad 
    \y = \begin{bmatrix} \sqrt{\frac{\abs{M_{\m_1}}}{2t_1}}f^*(\tilde{\z}_1) \\ \vdots \\ \sqrt{\frac{\abs{M_{\m_T}}}{2t_T}}f^*(\tilde{\z}_T) \end{bmatrix}.
\end{equation*}
The empirical covariance and correlation can be written in matrix form as
\begin{equation*}
    \bSigma_S = \frac{1}{T} \bH\bH^\top, \quad \bc = \frac{1}{T} \bH\y.
\end{equation*}
Substituting these into the energy equation $E(\W) = \bc^\top \bSigma_S^{\dagger} \bc$:
\begin{align*}
    E(\W) ={}& \left( \frac{1}{T} \bH\y \right)^\top \left( \frac{1}{T} \bH\bH^\top \right)^{\dagger} \left( \frac{1}{T} \bH\y \right) \\
    ={}& \frac{1}{T} \y^\top \bH^\top (\bH\bH^\top)^{\dagger} \bH \y.
\end{align*}

We recognize the term $\bP = \bH^\top (\bH\bH^\top)^{\dagger} \bH$ as the orthogonal projection matrix onto the row space of $\bH$. Under the over-parameterization assumption, the network is expressive enough to represent the target perfectly, meaning $\y$ lies in the row space of $\bH$. Therefore, the projection preserves the vector: $\bP\y = \y$. The energy simplifies to:
\begin{equation*}
    E(\W) = \frac{1}{T} \y^\top \y = \frac{1}{T} \sum_{i=1}^T \frac{\abs{M_{\m_i}}}{2t_i}f^*(\tilde{\z}_i)^2.
\end{equation*}
This value depends solely on the dataset labels and is independent of the weights $\W$. Consequently, the energy landscape is flat, and feature learning collapses.
\end{proof}

\section{Proof of \cref{cor:signal_optimal_mask}}
\label{app: cor signal optimal mask}

\begin{corollary}[Signal-Optimal Masking Rate]
    Assume the masking rate follows a uniform distribution $t \sim U[t_0, t_1]$. The signal-noise ratio dictates the optimal strategy by maximizing $P_S$ as
    \begin{equation*}
        t_0=t_1=\frac{1}{k+1}.
    \end{equation*}
    
\end{corollary}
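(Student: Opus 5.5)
The plan is to reduce everything to the one-variable function $g(t)=(k+1)t(1-t)^k$, because by \cref{thm:eff_loss} we have $P_S=\E_{t\sim U[t_0,t_1]}[g(t)]$. A uniform average of $g$ over any interval $[t_0,t_1]$ cannot exceed $\sup_{t\in[0,1]} g(t)$. Moreover, in the degenerate case $t_0=t_1$ the distribution is a point mass and $P_S=g(t_0)$. So maximizing $P_S$ over all admissible uniform distributions, including point masses, amounts to maximizing $g$ on $[0,1]$. We then check that the bound is attained only by the point mass at the maximizer.

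First I find the maximizer of $g$. Differentiating gives
\[ g'(t)=(k+1)(1-t)^{k-1}\bigl[(1-t)-kt\bigr]=(k+1)(1-t)^{k-1}\bigl[1-(k+1)t\bigr], \]
so the only interior critical point is $t^*=1/(k+1)$. Also $g'>0$ on $(0,t^*)$ and $g'<0$ on $(t^*,1)$, while $g(0)=g(1)=0$. Hence $t^*$ is the unique global maximizer, with value $g(t^*)=\bigl(k/(k+1)\bigr)^k$.

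Next I show that a nondegenerate interval does strictly worse. If $t_0<t_1$, then
\[ P_S=\frac{1}{t_1-t_0}\int_{t_0}^{t_1}g(t)\,dt. \]
Since $g$ is continuous and $g(t)<g(t^*)$ for every $t\neq t^*$, the integrand is strictly below $g(t^*)$ except at one point. Therefore $P_S<g(t^*)$. The supremum is thus attained exactly at $t_0=t_1=1/(k+1)$.

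The only delicate point is admitting the degenerate interval as a limit of $U[t_0,t_1]$, i.e. as a point mass. I handle it by noting that $\frac{1}{t_1-t_0}\int_{t_0}^{t_1}g\to g(t_0)$ as $t_1\to t_0$. By continuity, the optimum is then approached by intervals shrinking to $t^*$. If one insists on $t_0<t_1$, the maximum is not attained, and the supremum is the point mass at $t^*$.
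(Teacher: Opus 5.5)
Your proposal is correct and follows essentially the same route as the paper: reduce to maximizing $t(1-t)^k$ pointwise and locate the critical point $t^*=1/(k+1)$. Your averaging bound, the strict inequality for nondegenerate intervals, and the sign analysis of $g'$ supply the justification for the paper's bare assertion that ``the maximum is reached when $t_0=t_1=\arg\max_t t(1-t)^k$.''
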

\begin{proof}
By \cref{thm:eff_loss} and \cref{thm:energy landscape}, we get the signal-optimal masking rate by maximizing 
\begin{equation*}
    P_S=(k+1)\E_{t\sim U\qty[t_0,t_1]}\qty[t(1-t)^k]. 
\end{equation*}
The maximum is reached when 
\begin{equation*}
    t_0=t_1=\arg\max_t t\qty(1-t)^k.
\end{equation*}
Set the derivative of $t\qty(1-t)^k$ to be $0$, we have
\begin{equation*}
    \frac{\mathrm{d}\qty(t\qty(1-t)^k)}{\mathrm{d}t}=\qty(1-t)^k-kt\qty(1-t)^{k-1}=0.
\end{equation*}
Thus, we get $t^*=\frac{1}{k+1}$.

Notably, when $k\rightarrow\infty$, we have
\begin{equation*}
    \lim_{k\rightarrow\infty}P_S=\lim_{k\rightarrow\infty}\qty(1-\frac{1}{k+1})^k=\frac{1}{e}.
\end{equation*}
\end{proof}

\section{Experimental Details}
\subsection{Learning Dynamics with Uniform Attention}
\label{app: structure reduction}
The model is trained using a masking range of $t \in [0, 0.2]$ with loss in \cref{eq:original loss}. We employ full-batch training with a batch size of 5,000, a learning rate of $1 \times 10^{-3}$, and a hidden dimension of $D=2560$ (embedding dimension $d=63$). As shown in \cref{fig:parity_mlp}, the model transitions smoothly from random guessing (0.5 accuracy) to near-perfect generalization without the prolonged "plateau" phase typically associated with grokking in standard architectures. This suggests that the MD objective inherently stabilizes the learning of sparse functional dependencies, even when the attention structural is removed.

\begin{figure}[h]
    \centering
    \includegraphics[width=0.5\linewidth]{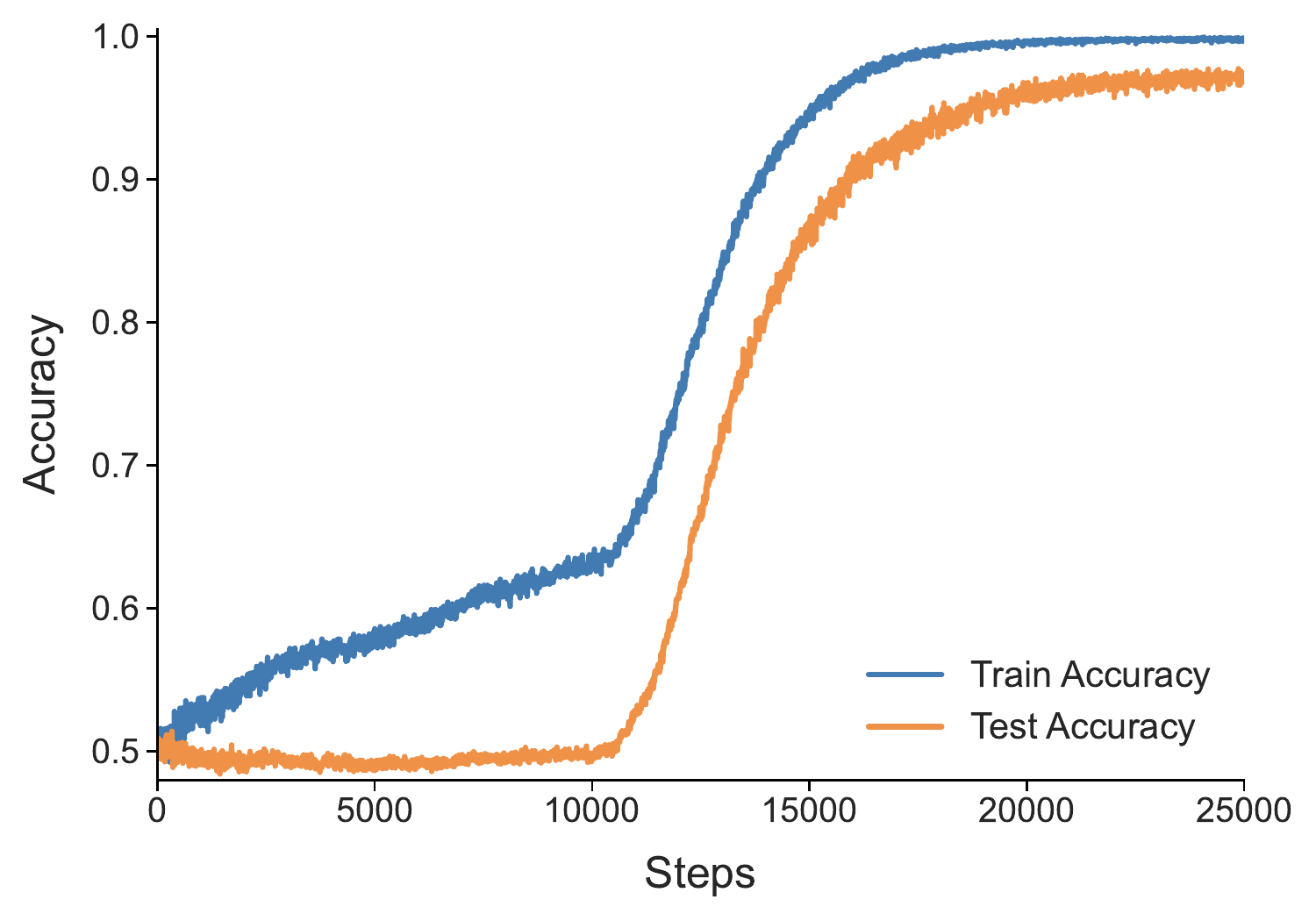}
    \caption{\textbf{Learning Dynamics with Uniform Attention.} Training and test accuracy on the $(20, 6)$ parity task using an MLP model where attention is fixed to a uniform distribution. The model demonstrates efficient generalization to the test set shortly after the training loss begins to converge, showing no characteristic ``grokking" delay despite the absence of attention mechanisms.}
    \label{fig:parity_mlp}
\end{figure}

\subsection{Learning parity by output supervision}
\label{app:full traj of parity}
In \cref{sec:parity experiment}, we compared the convergence speeds of standard training and masked diffusion on the $(n,k)=(20,6)$ parity problem on the nanoGPT implementations where layer norm, residual connection and cross-entropy loss are included. \cref{fig:parity_combined} (main text) shows that within the initial training window, standard training reaches perfect training accuracy but fails to generalize, staying at the $50\%$ chance level.

To confirm that this behavior constitutes grokking rather than a total failure to learn, we extended the training duration for the standard model. As illustrated in \cref{fig:parity_whole}, the validation accuracy for the standard training objective eventually converges to $100\%$, but only after a significantly longer period of plateauing compared to the Masked Diffusion.

\begin{figure}[h]
    \centering
    
    \begin{subfigure}{0.48\linewidth}
        \centering
        \includegraphics[width=\linewidth]{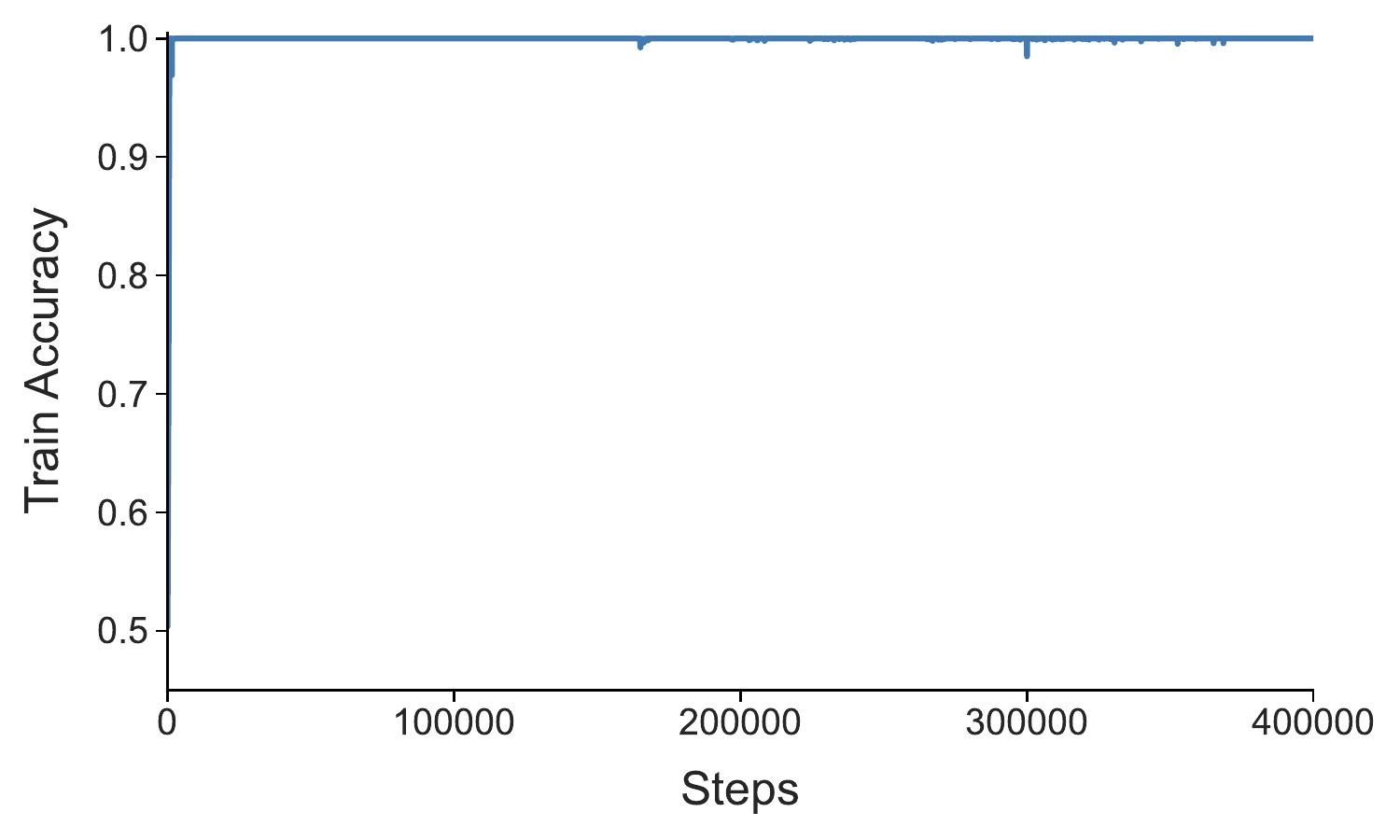} 
        \caption{Training Accuracy}
        \label{fig:parity_train_whole}
    \end{subfigure}
    \hfill
    \begin{subfigure}{0.48\linewidth}
        \centering
        \includegraphics[width=\linewidth]{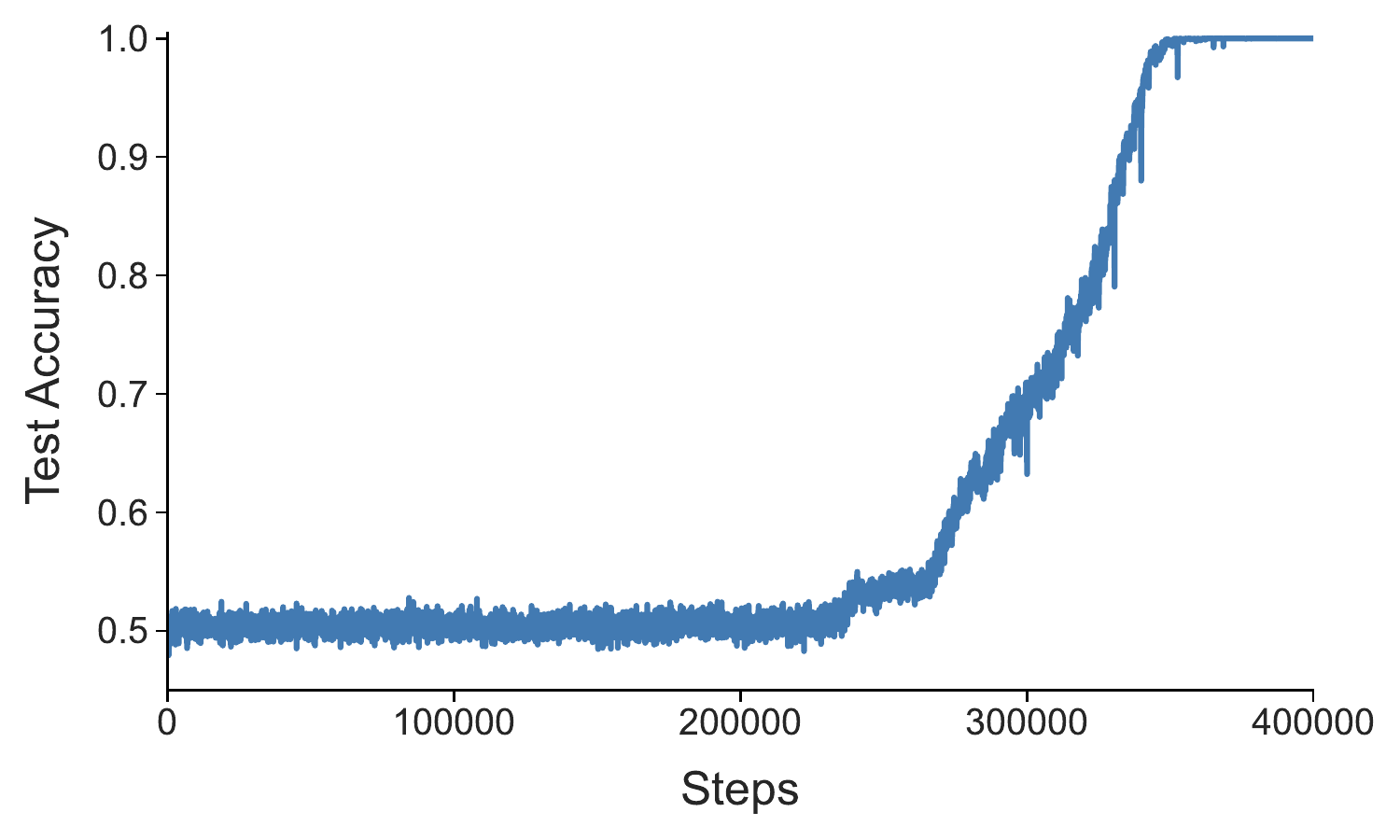}
        \caption{Validation Accuracy}
        \label{fig:parity_val_whole}
    \end{subfigure}

    \caption{\textbf{Learning parity by output supervision.} (a) Training accuracy and (b) Validation accuracy. Output supervision finally converges to $100\%$ test accuracy after a long training.}
    \label{fig:parity_whole}
\end{figure}

\subsection{Full Test Loss Trajectories Across Masking Intervals}
\label{app: whole test loss over time}
To provide a more granular view of the training dynamics, we present the test loss trajectories for all ten sub-interval models and the full-range baseline ($\mathcal{U}[0, 1]$) in \cref{fig:50M_test_loss_over_time}. While the final performance in \cref{fig:ar_loss_vs_t} summarizes the outcome at 6,000 steps, the temporal data reveals that models trained in the ``Signal-Rich" window ($t \in [0.4, 0.6]$) exhibit not only lower final loss but also a consistently steeper rate of convergence from the earliest stages of training. 

Notably, the extreme intervals ($t \in [0, 0.1]$ and $t \in [0.9, 1.0]$) quickly plateau at a significantly higher loss level, confirming that these regions provide insufficient or overly noisy gradients for effective representation learning. The $\mathcal{U}[0, 1]$ baseline (black line) remains competitive but is consistently surpassed by the mid-range intervals, justifying our decision to concentrate the sampling budget on the most informative noise levels.

\begin{figure}[h]
    \centering
    \includegraphics[width=0.5\linewidth]{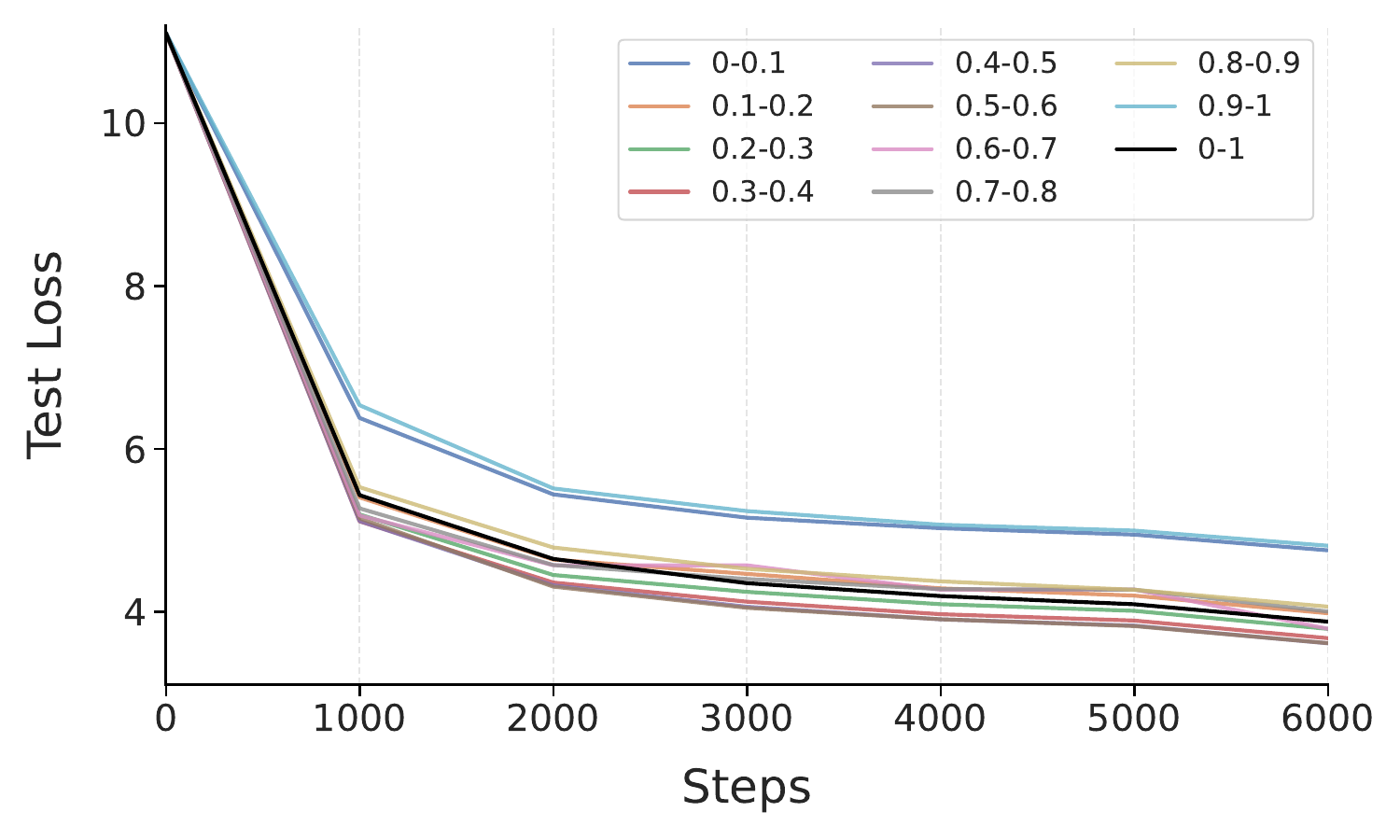}
    \caption{\textbf{Full Test Loss Trajectories Across Masking Intervals.} Comparative evaluation of test loss over 6,000 training steps for 50M-parameter models. Each colored line represents a model trained exclusively on a restricted masking interval of width $0.1$. The black line represents the standard $\mathcal{U}[0, 1]$ baseline. Intervals centered around $t=0.5$ show the most rapid and sustained decrease in loss, while extreme masking ratios (top lines) fail to converge to competitive minima. Zoom in for more details.}
    \label{fig:50M_test_loss_over_time}
\end{figure}

\subsection{Why Does the Optimum Fall Near $t \approx 0.5$?}
\label{app: ngram model of optimal t}

The U-shaped curve in \cref{fig:ar_loss_vs_t} is not merely empirical. We now show that
the location of the signal-rich window can be {predicted from corpus statistics},
by modeling natural language as a long-tail mixture of $n$-gram dependencies and asking
which masking ratio maximizes the expected learning signal.

\paragraph{Mixture of dependencies.}
Natural language is well approximated by a hierarchy of $k$-gram statistics
\cite{shannon1951prediction, jm3}. We treat the corpus as a mixture in which a fraction
$w_i$ of the predictive structure is carried by order-$i$ dependencies. Mastering the dependency of a full $i$-gram requires isolating the \emph{synergistic}
information absent from every lower-order sub-context; under Partial Information
Decomposition, this pure synergy is canonically represented by the parity (XOR)
interaction \cite{rosas2019quantifying, williams2010nonnegative}. This grounds our use of parity as a proxy for high-order language
modeling, rather than a heuristic.

\paragraph{Aggregate signal and the predicted optimum.}
Summing over the mixture gives the expected total learning signal
\begin{equation*}
\mathrm{Signal}(t) \;\propto\; \sum_{i} w_i\, t\,(1-t)^{\,i-1}.
\end{equation*}
To calculate the optimal $t$, we first estimate $w_i$ from the actual $n$-gram ($i \in [2,6]$)
frequency distribution of \texttt{WikiText}, discarding $n$-grams occurring fewer than
five times \cite{mikolov2013distributed}. The resulting long-tail weights are
$w_2 = 0.7427$, $w_3 = 0.2063$, $w_4 = 0.0401$, $w_5 = 0.0084$, and $w_6 = 0.0024$.
Substituting these into $\mathrm{Signal}(t)$ and maximizing yields an optimal masking
ratio of $t^\star \approx 0.46$, in close agreement with the empirically optimal window
$t \in [0.4, 0.6]$ identified above. The theory therefore pinpoints the signal-rich
window directly from corpus statistics, explaining \emph{why} our chosen range
$t \in [0.45, 0.55]$ is near-optimal.

We note that this estimate is only an approximation: it models the learning
signal as \emph{linear} in $P_S$. If we assume the signal scale quadratically with $P_S$ according to \cref{thm:energy landscape},  $t^\star$ would shift further toward
$0.5$ (to $t^\star \approx 0.48$ in the quadratic case).

\subsection{Details on dataset \texttt{tulu-3-sft-personas-math-filtered}}
\label{app: supervised fine-tuning}
The \texttt{tulu-3-sft-personas-math-filtered} dataset is a refined subset of the \texttt{tulu-3-sft-personas-math} collection within the \texttt{tulu-3-sft-mixture}. It comprises 80.5k synthetically generated examples specifically designed to bolster the model’s proficiency in resolving sophisticated and high-difficulty mathematical word problems.

\subsection{Details on Pretraining and Fine-tuning}
\label{app: large scale details}

We utilize the \textbf{AdamW} optimizer for both the pretraining and fine-tuning stages. To ensure the reliability of our empirical results, we conducted extensive preliminary evaluations across various scales—from toy settings to 50M parameter models—and observed a consistent performance pattern that persists at the 8B scale.

Experiments was conducted on \textbf{8$\times$NVIDIA H100 GPUs}. Pretraining on \texttt{LLaDA-8B Base} for 15,000 steps requires approximately 20 hours per run, while fine-tuning for 1,200 steps takes roughly 6 hours. Regarding evaluation, we strictly follow the protocols established by \citet{nie2025large}, with the exception of conditional generation benchmarks, where we fix both the generation length and the number of sampling steps to 512. The specific hyperparameter configurations are summarized in Table~\ref{tab:hyperparams}.

\begin{table}[h]
\centering
\caption{Hyperparameters for Pretraining and Fine-tuning on \texttt{LLaDA-8B Base}.}
\label{tab:hyperparams}
\begin{tabular}{lcc}
\toprule
\textbf{Hyperparameter} & \textbf{Pretraining} & \textbf{Fine-tuning} \\ \midrule
Optimizer               & AdamW                & AdamW                \\
Max Learning Rate           & $1 \times 10^{-4}$   & $2 \times 10^{-5}$   \\
Weight Decay            & 0.01                 & 0                    \\
Warm-up Steps           & 2,000                & 160                  \\ \bottomrule
\end{tabular}
\end{table}

\end{document}